\documentclass{article}

\usepackage{microtype}
\usepackage{graphicx}
\usepackage{subfigure}
\usepackage{booktabs} 
\usepackage{hyperref}       

\usepackage[accepted]{icml2021}

\usepackage{url}            
\usepackage{nicefrac}       
\usepackage{microtype}      
\usepackage{wrapfig}
\usepackage{amsfonts}       
\usepackage{amsmath}
\usepackage{amssymb}
\usepackage{amsthm}
\usepackage{color}
\usepackage{listings} 

\usepackage{multirow}
\usepackage{multicol}
\usepackage{ltxtable} 
\newcolumntype{L}{>{\centering\arraybackslash} m{0.04\columnwidth}} 
\newcolumntype{R}{>{\centering\arraybackslash} m{0.48\columnwidth}} 
\newcolumntype{S}{>{\centering\arraybackslash} m{0.32\columnwidth}} 

\newtheorem{theorem}{Theorem}
\newtheorem{lemma}[theorem]{Lemma}
\newtheorem{corollary}[theorem]{Corollary}

\newtheorem{remark}{Remark}

\def\A{{\bf A}}
\def\a{{\bf a}}
\def\B{{\bf B}}
\def\bb{{\bf b}}
\def\C{{\bf C}}

\def\e{{\bf e}}

\def\g{{\bf g}}

\def\G{{\bf G}}

\def\I{{\bf I}}
\def\K{{\bf K}}

\def\PP{{\bf P}}
\def\pp{{\bf p}}

\def\S{{\bf S}}

\def\V{{\bf V}}

\def\W{{\bf W}}
\def\w{{\bf w}}
\def\X{{\bf X}}
\def\x{{\bf x}}
\def\Y{{\bf Y}}

\def\Z{{\bf Z}}
\def\z{{\bf z}}
\def\0{{\bf 0}}
\def\1{{\bf 1}}

\def\OM{{\mathcal O}}

\def\RB{{\mathbb R}}
\def\EB{{\mathbb E}}

\def\De{{\boldsymbol \Delta}}

\def\Gam{\mbox{\boldmath$\Gamma$\unboldmath}}

\def\argmin{\mathop{\rm argmin}}

\def\vect{\mathrm{vec}}

\def\etal{{\em et al.\/}\,}

\def\din{{d_{\textrm{in}}}}
\def\dout{{d_{\textrm{out}}}}

\icmltitlerunning{Matrix Sketching for Secure Collaborative Machine Learning}

\author{
 Mengjiao Zhang {\texttt{and}} Shusen Wang \\
 Department of Computer Science\\
 Stevens Institute of Technology\\
 Hoboken, NJ 07030 \\
 \texttt{\{mzhang49, shusen.wang\}@stevens.edu} \\
}

\begin{document}

\twocolumn[
\icmltitle{Matrix Sketching for Secure Collaborative Machine Learning}



%
\begin{icmlauthorlist}
	\icmlauthor{Mengjiao Zhang}{stevens}
	\icmlauthor{Shusen Wang}{stevens}
\end{icmlauthorlist}
\icmlaffiliation{stevens}{ Department of Computer Science, Stevens Institute of Technology, Hoboken, NJ 07030}
%
\icmlcorrespondingauthor{Mengjiao Zhang}{mzhang49@stevens.edu}
\icmlcorrespondingauthor{Shusen Wang}{shusen.wang@stevens.edu}

\icmlkeywords{Federated Learning, Matrix Sketching}

\vskip 0.3in
]

\printAffiliationsAndNotice{}

\begin{abstract}
Collaborative learning allows participants to jointly train a model without data sharing. To update the model parameters, the central server broadcasts model parameters to the clients, and the clients send updating directions such as gradients to the server. While data do not leave a client device, the communicated gradients and parameters will leak a client's privacy. Attacks that infer clients' privacy from gradients and parameters have been developed by prior work. Simple defenses such as dropout and differential privacy either fail to defend the attacks or seriously hurt test accuracy.
We propose a practical defense which we call Double-Blind Collaborative Learning (DBCL). The high-level idea is to apply random matrix sketching to the parameters (aka weights) and re-generate random sketching after each iteration. DBCL prevents clients from conducting gradient-based privacy inferences which are the most effective attacks. DBCL works because from the attacker's perspective, sketching is effectively random noise that outweighs the signal. Notably, DBCL does not much increase computation and communication costs and does not hurt test accuracy at all. 
\end{abstract}

\section{Introduction}
\label{sec:intro}

Collaborative learning allows multiple parties to jointly train a model using their private data but without sharing the data.
Collaborative learning is motivated by real-world applications, for example, training a model using but without collecting mobile user's data.


Distributed stochastic gradient descent (SGD) 
is perhaps the simplest approach to collaborative learning.
Specifically, the central server broadcasts model parameters to the clients, each client uses a batch of local data to compute a stochastic gradient, and the server 
aggregates the stochastic gradients and updates the model parameters.
Based on distributed SGD, communication-efficient algorithms such as federated averaging (\texttt{FedAvg}) \cite{mcmahan2017communication} and \texttt{FedProx} \cite{sahu2019federated} have been developed and analyzed \cite{li2019convergence,stich2018local,wang2018cooperative,yu2019parallel,zhou2017convergence}.


Collaborative learning seemingly protects clients' privacy.
Unfortunately, this has been demonstrated not true by recent studies \cite{hitaj2017deep,melis2019exploiting,zhu2019deep}.
Even if a client's data do not leave his device, important properties of his data can be disclosed from the model parameters and gradients.
To infer other clients' data, the attacker needs only to control one client device and access the model parameters in every iteration; the attacker does not have to take control of the server \cite{hitaj2017deep,melis2019exploiting,zhu2019deep}.

The reason why the attacks work is that model parameters and gradients carry important information about the training data \cite{ateniese2015hacking,fredrikson2015model}.
\citet{hitaj2017deep} used the jointly learned model as a discriminator for training a generator which generates other clients' data.
\citet{melis2019exploiting} used gradient for inferring other clients' data properties.
\citet{zhu2019deep} used both model parameters and gradients for recovering other clients' data.
Judging from published empirical studies, the gradient-based attacks \cite{melis2019exploiting,zhu2019deep} are more effective than the parameter-based attack \cite{hitaj2017deep}.
Our goal is to defend the gradient-based attacks.

Simple defenses, e.g., differential privacy \cite{dwork2011differential} and dropout \cite{srivastava2014dropout}, have been demonstrated not working well by \cite{hitaj2017deep,melis2019exploiting}.
While differential privacy \cite{dwork2011differential}, i.e., adding noise to model parameters or gradients, works if the noise is strong, the noise inevitably hurts the accuracy and may even stop the collaborative learning from making progress \cite{hitaj2017deep}.
If the noise is not strong enough, clients' privacy will leak.
Dropout training \cite{srivastava2014dropout} randomly masks a fraction of the parameters, making the clients have access to only part of the parameters in each iteration.
However, knowing part of the parameters is sufficient for conducting the attacks.

\paragraph{Proposed method.}
We propose \textit{Double-Blind Collaborative Learning (DBCL)} as a practical defense against gradient-based attacks, e.g., \cite{melis2019exploiting,zhu2019deep}.
Technically speaking, DBCL applies random sketching \cite{woodruff2014sketching} to the parameter matrices of a neural network, and the random sketching matrices are regenerated after each iteration.
Throughout the training, the clients do not see the real model parameters; what the clients see are sketched parameters.
The server does not see any real gradient or descending direction; what the server sees are approximate gradients based on sketched data and sketched parameters.
This is why we call our method \textit{double-blind}.

From an honest client's perspective, DBCL is similar to dropout training, except that we use sketching to replace uniform sampling; see the discussions in Section~\ref{sec:theory:optimization}.
It is very well known that dropout does not hurt test accuracy at all \cite{srivastava2014dropout,wager2013dropout}.
Since sketching has similar properties as uniform sampling, DBCL does not hurt test accuracy, which is corroborated by our empirical studies.

From an attacker's perspective, DBCL is effectively random noise injected into the gradient, which the attacker needs for inferring other clients' privacy.
Roughly speaking, if a client tries to estimate the gradient, then he will get
\vspace{-1mm}
\begin{equation*}
    \textrm{Estimated Grad}
    \; = \; \textrm{Transform } \big( \textrm{True Grad} \big)
    \, + \, \textrm{Noise} .
\vspace{-1mm}
\end{equation*}
Therefore, client-side gradient-based attacks do not work.
Detailed discussions are in Section~\ref{sec:theory:defense}.

In addition to its better security, DBCL has the following nice features:
\vspace{-1mm}
\begin{itemize}
    \item 
    DBCL does not hinder test accuracy. This makes DBCL superior to the existing defenses.
\vspace{-1mm}
    \item
    DBCL does not increase the per-iteration time complexity and communication complexity, although it reasonably increases the iterations for attaining convergence.
\vspace{-1mm}
    \item
    When applied to dense layers and convolutional layers, DBCL does not need extra tuning.
\vspace{-1mm}
\end{itemize}
It is worth mentioning that DBCL is not an alternative to the existing defenses; instead, DBCL can be combined with the other defenses (except dropout).

\paragraph{Difference from prior work.}
Prior work has applied sketching to neural networks for privacy, computation, or communication benefits \cite{hanzely2018sega,li2019privacy}.
\citet{blocki2012johnson,kenthapadi2012privacy} showed that random sketching are differential private, but the papers have very different methods and applications from our work.
\citet{hanzely2018sega,li2019privacy} applies sketching to the gradients during federated learning.
In particular, \citet{li2019privacy} showed that sketching the gradients is differential private.

Our method is different from the aforementioned ones.
We sketch the model parameters, not the gradients.
We re-generate sketching matrices after each communication.
Note that sketching the gradients protects privacy at the cost of substantial worse test accuracy.
In contrast, our method does not hurt test accuracy at all.

\paragraph{Limitations.}
While we propose DBCL as a practical defense against gradient-based attacks at little cost, we do not claim DBCL as a panacea.
DBCL has two limitations.
First, with DBCL applied, a malicious client cannot perform gradient-based attacks, but he may be able to perform parameter-based attacks such as \cite{hitaj2017deep};
fortunately, the latter is much less effective than the former.
Second, DBCL cannot prevent a malicious server from inferring clients' privacy, although DBCL makes the server's attack much less effective.

We admit that DBCL alone does not defeat all the attacks.
To the best of our knowledge, there does not exist any defense that is effective for all the attacks that infer privacy.
DBCL can be easily incorporated with existing methods such as homomorphic encryption and secret sharing to defend server-side attacks.

\paragraph{Paper organization.}
Section~\ref{sec:pre} introduces neural network, backpropagation, and matrix sketching.
Section~\ref{sec:attack} defines threat models.
Section~\ref{sec:alg} describes the algorithm, including the computation and communication.
Section~\ref{sec:theory} theoretically studies DBCL.
Section~\ref{sec:exp} presents empirical results to demonstrate that DBCL does not harm test accuracy, does not much increase the communication cost, and can defend gradient-based attacks.
Section~\ref{sec:related} discusses related work.
Algorithm derivations and theoretical proofs are in the appendix.
The source code is available at the Github repo: 
\url{https://github.com/MengjiaoZhang/DBCL}

\section{Preliminaries} \label{sec:pre}

\paragraph{Dense layer.}
Let $\din$ be the input shape, $\dout$ be the output shape, and $b$ be the batch size.
Let $\X \in \RB^{b\times \din}$ be the input, $\W \in \RB^{\dout \times \din}$ be the parameter matrix, and $\Z = \X \W^T$ be the output.
After the dense (aka fully-connected) layer, there is typically an activation function $\sigma (\Z)$ applied elementwisely.

\paragraph{Backpropagation.}
Let $L$ be the loss evaluated on a batch of $b$ training samples.
We derive backpropagation for the dense layer by following the convention of PyTorch.
Let $\G \triangleq \frac{ \partial \, L }{\partial \, \Z} \in \RB^{b\times \dout}$ be the gradient received from the upper layer.
We need to compute the gradients:
\vspace{-1mm}
\begin{small}
	\begin{equation*}
	\frac{ \partial \, L }{\partial \, \X}
	\: = \: \G \W 
	\: \in \: \RB^{ b \times \din }
	\quad \textrm{and} \quad
	\frac{ \partial \, L }{\partial \, \W}
	\: = \: \G^T \X
	\: \in \: \RB^{ \dout \times \din } ,
	\end{equation*}
\end{small}
which can be established by the chain rule.
We use $\frac{ \partial \, L }{\partial \, \W}$ to update the parameter matrix $\W$ by e.g., $\W \leftarrow \W - \eta \frac{ \partial \, L }{\partial \, \W}$, and pass $\frac{ \partial \, L }{\partial \, \X}$ to the lower layer.

\paragraph{Uniform sampling matrix.}
We call $\S \in \RB^{\din \times s}$ a uniform sampling matrix if its columns are sampled from the set $\big\{ \tfrac{\sqrt{\din}}{\sqrt{s}} \e_1, \cdots , \tfrac{\sqrt{\din}}{\sqrt{s}} \e_{\din} \big\}$
uniformly at random.
Here, $\e_i$ is the $i$-th standard basis of $\RB^{\din}$.
We call $\S$ a uniform sampling matrix because $\X \S$ contains $s$ randomly sampled (and scaled) columns of $\X$.
Random matrix theories \cite{drineas2016randnla,mahoney2011ramdomized,martinsson2020randomized,woodruff2014sketching} guarantee that $\EB_{\S} \big[ \X \S \S^T \W^T \big] = \X \W^T $ and that $\| \X \S \S^T \W^T - \X \W^T \|$ is bounded, for any $\X$ and $\W$.

\paragraph{CountSketch.}
We call $\S \in \RB^{\din \times s}$ a CountSketch matrix \cite{charikar2004finding,clarkson2013low,pham2013fast,thorup2012tabulation,weinberger2009feature} if it is constructed in the following way.
Every row of $\S$ has exactly one nonzero entry whose position is randomly sampled from $[s] \triangleq \{1, 2, \cdots , s\}$ and value is sampled from $\{-1, +1\}$.
Here is an example of $\S$ ($10 \times 3$):
\begin{footnotesize}
	\[
	\S^T \; = \;
	\left[
	\begin{array}{cccccccccccccccc}
	0 & 0 & 1 &-1 & 1 &-1 & 0 & 0 & 0 & 0 \\
	-1& 0 & 0 & 0 & 0 & 0 & 1 & 1 &-1 & 0 \\
	0 &-1 & 0 & 0 & 0 & 0 & 0 & 0 & 0 & 1 \\
	\end{array}
	\right].
	\]%
\end{footnotesize}%
CountSketch has very similar properties as random Gaussian matrices \cite{johnson1984extensions,woodruff2014sketching}.
We use CountSketch for its computation efficiency.
Given $\X \in \RB^{b\times \din}$, the CountSketch $\widetilde{\X} = \X \S$ can be computed in $\OM (\din b)$ time.
CountSketch is much faster than the standard matrix multiplication which has $\OM (\din b s)$ time complexity.
Theories in \cite{clarkson2013low,meng2013low,nelson2013osnap,woodruff2014sketching} guarantee that $\EB_{\S} \big[ \X \S \S^T \W^T \big] = \X \W^T $ and that $\| \X \S \S^T \W^T - \X \W^T \|$ is bounded, for any $\X$ and $\W$.
In practice, $\S$ is not explicitly constructed.

\section{Threat Models} \label{sec:attack}

In this paper, we consider attacks and defenses under the setting of client-server architecture; we assume the attacker controls a client.\footnote{A stronger assumption would be that the server is malicious. Our defense may not defeat a malicious server.}
Let $\W_{\textrm{old}}$ and $\W_{\textrm{new}}$ be the model parameters (aka weights) in two consecutive iterations.
The server broadcasts $\W_{\textrm{old}}$ to the clients, the $m$ clients use $\W_{\textrm{old}}$ and their local data to compute ascending directions $\De_1, \cdots , \De_m$ (e.g., gradients), and the server aggregates the directions by $\De=\frac{1}{m} \sum_{i=1}^m \De_i$ and performs the update 
$\W_{\textrm{new}} \leftarrow \W_{\textrm{old}} - \De$.
Since a client (say the $k$-th) knows $\W_{\textrm{old}}$, $\W_{\textrm{new}}$, and his own direction $\De_k$, he can calculate the sum of other clients' directions by
\begin{eqnarray} \label{eq:client_infer} 
\sum_{i\neq k} \De_i 
& = & m \De - \De_k \nonumber \\
& = & m \big(\W_{\textrm{old}} - \W_{\textrm{new}} \big) - \De_k .
\end{eqnarray}
In the case of two-party collaborative learning, that is, $m=2$, one client knows the updating direction of the other client.

Knowing the model parameters, gradients, or both, the attacker can use various ways \cite{hitaj2017deep,melis2019exploiting,zhu2019deep} to infer other clients’ privacy.
We focus on gradient-based attacks \cite{melis2019exploiting,zhu2019deep}, that is, the victim's privacy is extracted from the gradients.
Melis \etal (2019) \cite{melis2019exploiting} built a classifier and locally trained it for property inference.
The classifier takes the updating direction $\De_i$ as input feature and predicts the clients' data properties.
The client's data cannot be recovered, however, the classifier can tell, e.g., the photo is likely female.
Zhu \etal (2019) \cite{zhu2019deep} developed an optimization method called \textit{gradient matching} for recovering other clients' data; both gradient and model parameters are used.
It has been shown that simple defenses such as differential privacy \cite{dwork2010difficulties,dwork2011differential} and dropout \cite{srivastava2011distributed} cannot defend the attacks.

In decentralized learning, where participants are compute nodes in a peer-to-peer network, a node knows its neighbors' model parameters and thereby updating directions.
A node can infer the privacy of its neighbors in the same way as \cite{melis2019exploiting,zhu2019deep}.
In Appendix~\ref{sec:decentralized}, we discuss the attack and defense in decentralized learning; they will be our future work.

\section{Proposed Method}\label{sec:alg}

We present the high-level ideas in Section~\ref{sec:alg:idea}, elaborate on the implementation in Section~\ref{sec:alg:description}, and analyze the time and communication complexities in Section~\ref{sec:alg:complexity}.

\subsection{High-Level Ideas} \label{sec:alg:idea}

The attacks of \cite{melis2019exploiting,zhu2019deep} need the victim's updating direction, e.g., gradient, for inferring the victim's privacy.
Using standard distributed algorithms such as distributed SGD and Federated Averaging (\texttt{FedAvg}) \cite{mcmahan2017communication}, the server can see the clients' updating directions, $\De_1 , \cdots, \De_m$, and the clients can see the jointly learned model parameter, $\W$.
A malicious client can use \eqref{eq:client_infer} to get other clients' updating directions and then perform the gradient-based attacks such as \cite{melis2019exploiting,zhu2019deep}.

To defend the gradient-based attacks, our proposed Double-Blind Collaborative Learning (DBCL) applies random sketching to the inputs and parameter matrices.
Let $\X$ and $\W$ be the input batch and parameter matrix of a dense layer, respectively.
For some or all the dense layers, replace $\X$ and $\W$ by $\widetilde{\X} = \X \S$ and $\widetilde{\W} = \W \S$, respectively, and different layers have different sketching matrices, $\S$.
Each time $\W$ is updated, we re-generate $\S$.
DBCL is applicable to convolutional layers in a similar way; see Appendix~\ref{sec:conv}.




Let $\W_{\textrm{old}}$ and $\W_{\textrm{new}} $ be the true parameter matrices in two consecutive iterations; they are known to only the server.
The clients do not observe $\W_{\textrm{old}}$ and $\W_{\textrm{new}} $.
What the clients observe are the random sketches: $\widetilde{\W}_{\textrm{old}}  = \W_{\textrm{old}}  \S_{\textrm{old}} $ and $\widetilde{\W}_{\textrm{new}}  = \W_{\textrm{new}}  \S_{\textrm{new}} $.
In addition, the clients know $\S_{\textrm{old}}$ and $\S_{\textrm{new}}$.
To perform gradient-based attacks, a client seeks to estimate $\De = \W_{\textrm{old}} - \W_{\textrm{new}}$ based on what it knows, i.e., $\widetilde{\W}_{\textrm{old}} $, $\widetilde{\W}_{\textrm{new}} $, $\S_{\textrm{old}}$ and $\S_{\textrm{new}}$.
Our theories and experiments show that a client's estimate of $\De$ is far from the truth.


\subsection{Algorithm Description} \label{sec:alg:description}

We describe the computation and communication operations of DBCL.
We consider the client-server architecture, dense layers, and the distributed SGD algorithm.\footnote{DBCL works also for convolutional layers; see the appendix for the details.
DBCL can be easily extended to \texttt{FedAvg} or other communication-efficient frameworks.
DBCL can be applied to peer-to-peer networks; see the discussions in the appendix.}
DBCL works in the following four steps.
Broadcasting and aggregation are communication operations; forward pass and backward pass are local computations performed by each client for calculating gradients.

\paragraph{Broadcasting.}
The central server generates a new seed $\psi$\footnote{Let the clients use the same pseudo-random number generator as the server. Given the seed $\psi$, all the clients can construct the same sketching matrix $\S$.} and then a random sketch:
$\widetilde{\W}  = \W  \S $.
It broadcasts $\psi$ and $\widetilde{\W} \in \RB^{\dout \times s} $ to all the clients through message passing.
Here, the sketch size $s$ is determined by the server and must be set smaller than $\din$; the server may vary $s$ after each iteration.

\paragraph{Local forward pass.}
The $i$-th client randomly selects a batch of $b$ samples from its local dataset and then locally performs a forward pass.
Let the input of a dense layer be $\X_i  \in \RB^{b\times \din} $.
The client uses the seed $\psi$ to draw a sketch $\widetilde{\X}_i = \X_i \S \in \RB^{b\times s} $ and computes $\Z_i = \widetilde{\X}_i \widetilde{\W}^T$.
Then $\sigma (\Z_i )$ becomes the input of the upper layer, where $\sigma$ is some activation function.
Repeat this process for all the layers.
The forward pass finally outputs $L_i$, the loss evaluated on the batch of $b$ samples.

\paragraph{Local backward pass.}
Let the local gradient propagated to the dense layer be $\G_i \triangleq \frac{ \partial \, L_i }{ \partial \, \Z_i } \in \RB^{b \times \dout} $.
The client locally calculates
\begin{small}
\begin{align*}
& \Gam_{i}
\: = \: \G_i^T \widetilde{\X}_i 
\: \in \: \RB^{\dout \times s} 
\;\; \textrm{and} \;\;
\frac{ \partial \, L_i }{\partial \, \X_i}
\: = \: \G_i \widetilde{\W}  \S^T 
\: \in \: \RB^{b \times \din} .
\end{align*}
\end{small}%
The gradient $\frac{ \partial \, L_i }{\partial \, \X_i}$ is propagated to the lower-level layer to continue the backpropagation.

\paragraph{Aggregation.}
The server aggregates $\{ \Gam_{i} \}_{i=1}^m$ to compute $\Gam = \frac{1}{m} \sum_{i=1}^m \Gam_i$; this needs a communication.
Let $L = \frac{1}{m} \sum_{i=1}^m  L_i $ be the loss evaluated on the batch of $mb$ samples.
It can be shown that
\begin{small}
\begin{equation}\label{eq:grad_sketch}
\frac{ \partial \, L }{\partial \, \W}
\: = \: \frac{1}{m}  \sum_{i=1}^m \frac{ \partial \, L_i }{\partial \, \W}
\: = \: \Gam \S^T 
\: \in \: \RB^{\dout \times \din} .
\end{equation}
\end{small}%
The server then updates the parameters by, e.g., $\W \leftarrow \W - \eta \frac{ \partial \, L }{\partial \, \W}$.

\subsection{Time Complexity and Communication Complexity}\label{sec:alg:complexity}

DBCL does not increase the time complexity of local computations.
The CountSketch, $\widetilde{\X}_i = \X_i \S $ and $\widetilde{\W}_i = \W_i \S $, costs $\OM (b \din)$ and $\OM (\din \dout)$ time, respectively.
Using CountSketch, the overall time complexity of a forward and a backward pass is $\OM (b\din + \din \dout + b s \dout)$.
Since we set $s < \din$ to protect privacy, the time complexity is lower than the standard backpropagation, $\OM (b \din \dout)$.

DBCL does not increase per-iteration communication complexity.
Without using sketching, the communicated matrices are $\W \in \RB^{\dout \times \din}$ and $\frac{\partial \, L_i}{\partial \, \W} \in \RB^{\dout \times \din}$.
Using sketching, the communicated matrices are $\widetilde{\W} \in \RB^{\dout \times s}$ and $\Gam_i \in \RB^{\dout \times s}$.
Because  $s < \din$, the per-iteration communication complexity is lower than the standard distributed SGD.

\section{Theoretical Insights} \label{sec:theory}

In Section~\ref{sec:theory:threat}, we discuss how a malicious client makes use of the sketched model parameters for privacy inference.
In Sections~\ref{sec:theory:defense}, we show that DBCL can defend certain types of attacks.
In Section~\ref{sec:theory:optimization}, we give an explanation of DBCL from optimization perspective.

\subsection{Approximating the Gradient} \label{sec:theory:threat}

Assume the attacker controls a client and participates in collaborative learning.
Let $\W_{\textrm{old}}$ and $\W_{\textrm{new}}$ be the parameter matrices of two consecutive iterations;
they are unknown to the clients.
What a client sees are the sketches, $\widetilde{\W}_{\textrm{old}} =  \W_{\textrm{old}} \S_{\textrm{old}}$ and $\widetilde{\W}_{\textrm{new}} =  \W_{\textrm{new}} \S_{\textrm{new}}$.
To conduct gradient-based attacks, the attacker must know the gradient $\De = \W_{\textrm{old}} - \W_{\textrm{new}}$.

Without using $\S_{\textrm{old}}$ and $\S_{\textrm{new}}$, one cannot recover $\De$, not even approximately.
The difference between sketched parameters, $\widetilde{\De} = \widetilde{\W}_{\textrm{old}} - \widetilde{\W}_{\textrm{new}}$, is entirely different from the real gradient, ${\De} = {\W}_{\textrm{old}} - {\W}_{\textrm{new}}$.\footnote{The columns of $\S_{\textrm{old}}$ and $\S_{\textrm{new}}$ are randomly permuted. Even if $\widetilde{\De}$ is close to $\De$, after randomly permuting the columns of $\S_{\textrm{old}}$ or $\S_{\textrm{new}}$, $\widetilde{\De}$ becomes entirely different.}
We can even vary the sketch size $s$ with iterations so that $\widetilde{\W}_{\textrm{old}}$ and $\widetilde{\W}_{\textrm{new}}$ have different number of columns, making it impossible to compute ${\W}_{\textrm{old}} - {\W}_{\textrm{new}}$.

Note that the clients know also $\S_{\textrm{old}}$ and $\S_{\textrm{new}}$.
A smart attacker, who knows random matrix theory, may want to use
\[
\widehat{\De} = \W_{\textrm{old}} \S_{\textrm{old}} \S_{\textrm{old}}^T - \W_{\textrm{new}} \S_{\textrm{new}} \S_{\textrm{new}}^T 
\]
to approximate $\De$.
It is because $\widehat{\De}$ is an unbiased estimate of $\De$, i.e., $\EB \big[ \widehat{\De} \big] = \De$,
where the expectation is taken w.r.t.\ the random sketching matrices $\S_{\textrm{old}}$ and $\S_{\textrm{new}}$.

\subsection{Defending Gradient-Based Attacks} \label{sec:theory:defense}

We analyze the attack that uses $\widehat{\De}$.
We first give an intuitive explanation and then prove that using $\widehat{\De}$ does not work, unless the magnitude of $\De$ is smaller than $\W_{\textrm{new}}$.

\paragraph{Matrix sketching as implicit noise.}
As $\widehat{\De}$ is an unbiased estimate of $\De$, the reader may wonder why $\widehat{\De}$ does not disclose the information of $\De$.
Here we give an intuitive explanation.
$\widehat{\De}$ is a mix of $\De$ (which is the signal) and a random transformation of $\W_{\textrm{new}}$ (which is random noise):
\begin{small}
	\begin{equation} \label{eq:implicit_noise}
	\widehat{\De} 
	\: = \: \underbrace{\De }_{\textrm{signal}} \S_{\textrm{old}} \S_{\textrm{old}}^T + \underbrace{\W_{\textrm{new}} \big( \S_{\textrm{old}} \S_{\textrm{old}}^T - \S_{\textrm{new}} \S_{\textrm{new}}^T \big)}_{\textrm{zero-mean noise}} .
	\end{equation}
\end{small}%
As the magnitude of $\W$ is much greater than $\De$,\footnote{In machine learning, $\De$ is the updating direction, e.g., gradient. The magnitude of gradient is much smaller than the model parameters $\W$, especially when $\W$ is close to a stationary point.} the noise outweighs the signal, making $\widehat{\De}$ far from $\De$.
From the attacker's perspective, random sketching is effectively random noise that outweighs the signal.

\paragraph{Defending the gradient matching attack of \cite{zhu2019deep}.} 
The gradient matching attack can recover the victim's original data based on the victim's gradient, $\De_i$, and the model parameters, $\W$.
Numerical optimization is used to find the data on which the computed gradient matches $\De_i$.
To get $\De_i$, the attacker must know $\De = \sum_{i=1}^m \De_i$.
Using DBCL, no client knows $\De$.
A smart attacker may want to use $\widehat{\De}$ in lieu of $\De$ because of its unbiasedness.
We show in Theorem~\ref{thm:privacy2} that this approach does not work.

\vspace{1mm}

\begin{theorem} \label{thm:privacy2}
	Let $\S_{\textrm{old}} $ and $\S_{\textrm{new}}$ be ${\din\times s}$ CountSketch matrices and $s < \din$. 
	Then
	\begin{align*}
	\EB \big\| \widehat{\De}  - \De \big\|_F^2
	\: = \: \Omega \Big( \frac{\din}{s} \Big) \cdot \Big(  \big\| \W_{\textrm{old}}  \big\|_F^2 +  \big\| \W_{\textrm{new}} \big\|_F^2  \Big) .
	\end{align*}
\end{theorem}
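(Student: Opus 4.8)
The plan is to prove the stronger \emph{exact} identity
\[
\EB \big\| \widehat{\De} - \De \big\|_F^2 \; = \; \frac{\din - 1}{s}\,\Big( \big\| \W_{\textrm{old}} \big\|_F^2 + \big\| \W_{\textrm{new}} \big\|_F^2 \Big),
\]
from which the stated bound follows at once, since $\din - 1 \ge \din/2$ for $\din \ge 2$ gives $\frac{\din-1}{s} = \Omega(\din/s)$. First I would rewrite the error. Using $\widehat{\De} = \W_{\textrm{old}}\S_{\textrm{old}}\S_{\textrm{old}}^T - \W_{\textrm{new}}\S_{\textrm{new}}\S_{\textrm{new}}^T$ and $\De = \W_{\textrm{old}} - \W_{\textrm{new}}$, I get
\[
\widehat{\De} - \De \; = \; \W_{\textrm{old}}\big(\S_{\textrm{old}}\S_{\textrm{old}}^T - \I\big) - \W_{\textrm{new}}\big(\S_{\textrm{new}}\S_{\textrm{new}}^T - \I\big) \; =: \; \A - \B,
\]
which isolates the two independent sources of randomness.

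The CountSketch unbiasedness recalled in the preliminaries, $\EB_{\S}[\S\S^T] = \I$, gives $\EB[\A] = \EB[\B] = \0$. Since $\S_{\textrm{old}}$ and $\S_{\textrm{new}}$ are drawn independently, expanding the squared norm as $\EB\|\A-\B\|_F^2 = \EB\|\A\|_F^2 - 2\,\EB\langle \A,\B\rangle + \EB\|\B\|_F^2$ and using $\EB\langle \A,\B\rangle = \langle \EB\A, \EB\B\rangle = 0$ collapses the cross term. This reduces the whole theorem to evaluating a single quantity $\EB\big\|\W\big(\S\S^T - \I\big)\big\|_F^2$ and applying it to $\W_{\textrm{old}}$ and $\W_{\textrm{new}}$.

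The heart of the argument is this second-moment computation. Writing $\M = \S\S^T - \I$ and $\Q = \W^T\W$, symmetry of $\M$ gives $\|\W\M\|_F^2 = \tr(\M\Q\M) = \sum_{i,j,k} M_{ij} M_{ki}\, Q_{jk}$. I would use the hash-and-sign description of CountSketch: each row $i$ has its single nonzero entry in a uniformly random column $h(i)\in[s]$ with an independent Rademacher sign $g(i)$, so $S_{ij} = g(i)\,\mathbf{1}[h(i)=j]$. This yields $M_{ij} = g(i)g(j)\,\mathbf{1}[h(i)=h(j)]$ for $i \ne j$ and $M_{ii} = 0$. The key step is the case analysis for $\EB[M_{ij}M_{ki}]$: the vanishing diagonal forces $j\ne i$ and $k\ne i$, and independence of the signs forces $k=j$ for a nonzero contribution, leaving $\EB[M_{ij}M_{ji}] = \PB[h(i)=h(j)] = 1/s$. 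Summing over the $\din-1$ admissible values of $i$ for each $j$ gives $\EB\|\W\M\|_F^2 = \frac{\din-1}{s}\tr(\Q) = \frac{\din-1}{s}\|\W\|_F^2$.

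Combining the two pieces yields the claimed identity. The main obstacle is the combinatorial bookkeeping in this last computation: one must correctly track that the diagonal of $\M$ is zeroed out and identify exactly which coincidences among the indices $i,j,k$ survive both the sign expectation and the hash-collision probability. A careless treatment would either spuriously include diagonal contributions or drop the constraint $k=j$, corrupting the constant. Everything else is routine, and the constant obtained is in fact tight, which is stronger than the $\Omega(\din/s)$ claimed in the statement.
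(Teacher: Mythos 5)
Your proof is correct, and its skeleton coincides with the paper's: the paper likewise writes $\widehat{\De} - \De$ as a sum of two independent zero-mean sketching errors, $\W_{\textrm{old}}\big(\S_{\textrm{old}}\S_{\textrm{old}}^T - \I\big)$ and $-\W_{\textrm{new}}\big(\S_{\textrm{new}}\S_{\textrm{new}}^T-\I\big)$, and kills the cross term by independence (Lemma~\ref{lemma:countsketch0}, stated with a general right factor $\V$). Where you genuinely diverge is the second-moment computation. The paper never computes $\EB\|\W(\S\S^T-\I)\|_F^2$ from scratch: it imports the scalar CountSketch variance identity $\EB\big[\a^T\S\S^T\bb - \a^T\bb\big]^2 = \tfrac{1}{s}\big(\sum_{k\neq l}a_k^2b_l^2 + \sum_{k\neq l}a_kb_ka_lb_l\big)$ from \cite{pham2013fast,weinberger2009feature}, sums it entrywise (Lemma~\ref{lem:countsketch1}), assembles the general-$\V$ bound of Theorem~\ref{thm:privacy}, and then obtains the present theorem as the special case $\V=\I$. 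You instead prove the needed moment identity directly from the hash-and-sign representation $S_{ij}=g(i)\mathbf{1}[h(i)=j]$ via the trace expansion of $\tr(\M\Q\M)$; your bookkeeping (vanishing diagonal of $\M$, sign independence forcing $k=j$, collision probability $1/s$) is right and yields the exact identity $\tfrac{\din-1}{s}\big(\|\W_{\textrm{old}}\|_F^2+\|\W_{\textrm{new}}\|_F^2\big)$ --- which is precisely what the paper's Theorem~\ref{thm:privacy} reduces to at $\V=\I$, so the two routes agree on the constant. What each buys: the paper's route produces the general-$\V$ statement it needs anyway for the property-inference analysis (Theorem~\ref{thm:privacy} and Corollary~\ref{cor:privacy}) and outsources the combinatorics to cited results; yours is self-contained, skips the detour through $\V$, and makes explicit that the constant is tight rather than merely an $\Omega(\din/s)$ lower bound. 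One small point worth making explicit in your write-up: the cross-term cancellation requires $\S_{\textrm{old}}$ and $\S_{\textrm{new}}$ to be independent, an assumption the theorem statement leaves implicit (the paper's Lemma~\ref{lemma:countsketch0} states it); this is exactly the ``re-generate the sketch after each iteration'' mechanism that the defense relies on.
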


Since the magnitude of $\De$ is typically smaller than $\W$, Theorem~\ref{thm:privacy2} guarantees that using $\widehat{\De}$ is no better than all-zeros or random guessing.

\begin{remark}
The theorem shows that DBCL beats one way of gradient estimate.
Admittedly, the theorem does not guarantee that DBCL can defend all kinds of gradient estimates.
A stronger bound would be:
\begin{equation*}
    \min_{\Z_1, \Z_2}
    \left\| \W_{\textrm{old}}  \S_{\textrm{old}} \Z_1 - \W_{\textrm{new}}  \S_{\textrm{new}} \Z_2 - \De \right\|_F^2
    = \Omega (1) \cdot \big\| \De \big\|_F^2 .
\end{equation*}
Unfortunately, at this time, we do not know how to prove such a bound.
\end{remark}

\paragraph{Defending the property inference attack (PIA) of \cite{melis2019exploiting}.} 
To conduct the PIA of \cite{melis2019exploiting}, the attacker may want to use a linear model parameterized by $\V$.\footnote{The conclusion applies also to neural networks because its first layer is such a linear model.}
According to \eqref{eq:client_infer}, the attacker uses $\De - \A$ as input features for PIA, where $\A$ is some fixed matrix known to the attacker.
The linear model makes prediction by $\Y \triangleq (\De-\A) \V^T$.
Using $\widehat{\De}$ to approximate $\De$, the prediction is $\widehat{\Y} \triangleq ( \widehat{\De} -\A) \V^T$.
Theorem~\ref{thm:privacy} and Corollary~\ref{cor:privacy} show that 
$\| \widehat{\Y} - \Y \|_F^2 = \| \widehat{\De}  \V^T - \De \V^T  \|_F^2 $ is very big. 

\vspace{2mm}

\begin{theorem} \label{thm:privacy}
	Let $\S_{\textrm{old}} $ and $\S_{\textrm{new}}$ be ${\din\times s}$ CountSketch matrices and $s < \din$. 
	Let $w_{pq}$ be the $(p, q)$-th entry of $\W_{\textrm{old}} \in \RB^{\dout \times \din}$
	and $\tilde{w}_{pq}$ be the $(p, q)$-th entry of $\W_{\textrm{new}} \in \RB^{\dout \times \din}$.
	Let $\V$ be any $r\times \din$ matrix and $v_{pq}$ be the $(p, q)$-th entry of $\V$.
	Then
	\begin{small}
	\begin{align*}
	& \EB \big\| \widehat{\De}  \V^T - \De \V^T \big\|_F^2
	\; = \: \frac{1}{s} \sum_{i=1}^{\dout} \sum_{j=1}^{r} \sum_{k\neq l}\\
	& \qquad \quad \bigg(  w_{ik}^2 v_{jl}^2 +  w_{ik} v_{jk} w_{il} v_{jl} + \tilde{w}_{ik}^2 v_{jl}^2 +  \tilde{w}_{ik} v_{jk} \tilde{w}_{il} v_{jl} \bigg) .
	\end{align*}
	\end{small}%
\end{theorem}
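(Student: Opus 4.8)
The plan is to first linearize the quantity inside the norm. Writing $\M_{\textrm{old}} \triangleq \S_{\textrm{old}}\S_{\textrm{old}}^T - \I$ and $\M_{\textrm{new}} \triangleq \S_{\textrm{new}}\S_{\textrm{new}}^T - \I$, I would observe that
\[
\widehat{\De}\V^T - \De\V^T \;=\; \W_{\textrm{old}}\M_{\textrm{old}}\V^T - \W_{\textrm{new}}\M_{\textrm{new}}\V^T \;\triangleq\; \A - \B,
\]
since the two copies of $\De\V^T = (\W_{\textrm{old}}-\W_{\textrm{new}})\V^T$ cancel against the identity parts. Because $\S_{\textrm{old}}$ and $\S_{\textrm{new}}$ are drawn independently and each satisfies $\EB[\S\S^T]=\I$ (the unbiasedness recalled in Section~\ref{sec:pre}), the matrices $\A$ and $\B$ are independent and each has zero mean. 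Hence the inner-product cross term vanishes, $\EB\,\tr(\A^T\B) = \tr\big(\EB[\A]^T\EB[\B]\big) = 0$, and the problem reduces to $\EB\|\A-\B\|_F^2 = \EB\|\A\|_F^2 + \EB\|\B\|_F^2$: two structurally identical computations, one feeding $\W_{\textrm{old}}$ (the $w$-terms) and one feeding $\W_{\textrm{new}}$ (the $\tilde w$-terms).

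The core is then a single generic computation of $\EB\|\W\M\V^T\|_F^2$ with $\M=\S\S^T-\I$. I would use the coordinate description of CountSketch: each row $a$ carries an independent uniform hash $h(a)\in[s]$ and an independent sign $g(a)\in\{-1,+1\}$, so that $(\S\S^T)_{kl}=g(k)g(l)\,\mathbf{1}\{h(k)=h(l)\}$. In particular $(\S\S^T)_{kk}=g(k)^2=1$, so the diagonal of $\M$ is identically zero and its off-diagonal entries are $M_{kl}=g(k)g(l)\,\mathbf{1}\{h(k)=h(l)\}$ for $k\neq l$. Expanding the Frobenius norm entrywise (indices $i\in[\dout]$, $j\in[r]$, and $W_{ik},V_{jl}$ denoting the entries of $\W,\V$) and using that the diagonal drops out gives
\[
\EB\|\W\M\V^T\|_F^2 \;=\; \sum_{i,j}\sum_{k\neq l}\sum_{k'\neq l'} W_{ik}V_{jl}W_{ik'}V_{jl'}\,\EB[M_{kl}M_{k'l'}],
\]
so everything hinges on the pairwise second moment $\EB[M_{kl}M_{k'l'}]$.

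The main obstacle, and the only real content, is this moment computation. Since the signs are independent of the hashes, I would factor $\EB[M_{kl}M_{k'l'}] = \EB_g[g(k)g(l)g(k')g(l')]\cdot\EB_h\big[\mathbf{1}\{h(k)=h(l)\}\mathbf{1}\{h(k')=h(l')\}\big]$. The sign expectation equals $1$ exactly when every index in the multiset $\{k,l,k',l'\}$ occurs an even number of times and $0$ otherwise; under $k\neq l$ and $k'\neq l'$ this forces $\{k,l\}=\{k',l'\}$ as sets, leaving the two \emph{disjoint} orderings $(k',l')=(k,l)$ and $(k',l')=(l,k)$ (disjoint because coinciding would require $k=l$). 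In each surviving case the hash expectation collapses to $\EB_h[\mathbf{1}\{h(k)=h(l)\}]=1/s$, so $\EB[M_{kl}M_{k'l'}] = \tfrac{1}{s}\big(\mathbf{1}\{(k',l')=(k,l)\}+\mathbf{1}\{(k',l')=(l,k)\}\big)$. Substituting the first ordering yields the square term $\tfrac1s\sum W_{ik}^2 V_{jl}^2$ and the second yields the cross term $\tfrac1s\sum W_{ik}V_{jk}W_{il}V_{jl}$; adding the identical $\W_{\textrm{new}}$ contribution reproduces the stated identity. I expect the bookkeeping of these two orderings — verifying they are disjoint so nothing is double counted, and matching each to the correct monomial — to be the one place that demands care.
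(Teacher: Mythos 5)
Your proposal is correct, and its first half coincides exactly with the paper's own argument: Lemma~\ref{lemma:countsketch0} in the appendix performs the identical decomposition, writing $\widehat{\De}\V^T-\De\V^T$ as the sum of the two independent, zero-mean terms $\W_{\textrm{old}}\big(\S_{\textrm{old}}\S_{\textrm{old}}^T-\I\big)\V^T$ and $\W_{\textrm{new}}\big(\I-\S_{\textrm{new}}\S_{\textrm{new}}^T\big)\V^T$, and kills the cross term by independence exactly as you do. Where you genuinely diverge is the second-moment computation. The paper's Lemma~\ref{lem:countsketch1} reduces the matrix quantity to vector quantities row by row and then simply cites \citet{pham2013fast} and \citet{weinberger2009feature} for the key identity $\EB\big[\a^T\S\S^T\bb-\a^T\bb\big]^2=\frac{1}{s}\big(\sum_{k\neq l}a_k^2b_l^2+\sum_{k\neq l}a_kb_ka_lb_l\big)$, so the heart of the theorem is outsourced to prior work. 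You instead prove that identity from first principles: the representation $(\S\S^T)_{kl}=g(k)g(l)\,\mathbf{1}\{h(k)=h(l)\}$ with zero diagonal for $\M\triangleq\S\S^T-\I$, the Rademacher parity argument forcing $\{k,l\}=\{k',l'\}$, the disjointness of the two surviving orderings $(k,l)$ and $(l,k)$, and the collapse of the hash expectation to $1/s$; these two orderings map precisely onto the square terms $w_{ik}^2v_{jl}^2$ and the cross terms $w_{ik}v_{jk}w_{il}v_{jl}$ in the statement. Your moment computation is sound (the parity argument and the disjointness claim both hold because $k\neq l$ and $k'\neq l'$), so the net effect is a self-contained proof of the same result: the paper's route is shorter on the page, while yours makes explicit where the factor $1/s$ and the two families of monomials come from, which a reader of the paper must otherwise chase into the cited references.
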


The bound in Theorem~\ref{thm:privacy} is involved.
To interpret the bound, we add (somehow unrealistic) assumptions and obtain Corollary~\ref{cor:privacy}.

\begin{corollary}\label{cor:privacy}
	Let $\S $ be a ${\din \times s}$ CountSketch matrix and $s < \din$. 
	Assume that the entries of $\W_{\textrm{old}}$ are IID and that the entries of $\V$ are also IID.
	Then
	\begin{align*}
	\EB \big\| \widehat{\De}  \V^T - \De \V^T \big\|_F^2
	\: = \: \Omega \Big( \frac{\din }{s}  \Big) \cdot \big\| \W_{\textrm{old}} \V^T  \big\|_F^2  .
	\end{align*}
\end{corollary}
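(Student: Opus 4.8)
The plan is to start from the exact identity of Theorem~\ref{thm:privacy} and collapse it under the IID hypotheses by averaging over the random entries. Write $E$ for the right-hand side of Theorem~\ref{thm:privacy} (itself already an expectation over the sketches $\S_{\textrm{old}},\S_{\textrm{new}}$), and regard the entries of $\V$ as IID, centered variables with $\EB[v_{jk}]=0$ and $\EB[v_{jk}^2]=\sigma_v^2$. The key observation is that it already suffices to take a further expectation over $\V$: the two cross terms $w_{ik}v_{jk}w_{il}v_{jl}$ and $\tilde w_{ik}v_{jk}\tilde w_{il}v_{jl}$ in Theorem~\ref{thm:privacy} both carry the factor $v_{jk}v_{jl}$ with $k\neq l$, and since $\EB[v_{jk}v_{jl}]=\EB[v_{jk}]\EB[v_{jl}]=0$, both cross terms vanish for \emph{every} fixed $\W_{\textrm{old}},\W_{\textrm{new}}$. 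This reduces the problem to the two surviving square terms; notably the IID-ness of $\W_{\textrm{old}}$ is not what kills the cross terms.

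After the cross terms die I am left with $\EB_{\V}[E]=\frac{\sigma_v^2}{s}\sum_{i}\sum_{j}\sum_{k\neq l}\bigl(w_{ik}^2+\tilde w_{ik}^2\bigr)$, and the inner double sum is a pure counting step: for fixed $i$, $\sum_{k\neq l}w_{ik}^2=(\din-1)\sum_k w_{ik}^2$, because each column index $k$ is paired with $\din-1$ distinct $l$. Summing over $i$ and over $j$ (a factor $r$) gives $\EB_{\V}[E]=\frac{\sigma_v^2 r(\din-1)}{s}\bigl(\|\W_{\textrm{old}}\|_F^2+\|\W_{\textrm{new}}\|_F^2\bigr)$, which I lower bound by discarding the nonnegative $\W_{\textrm{new}}$ term. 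For the denominator I expand $(\W_{\textrm{old}}\V^T)_{ij}=\sum_k w_{ik}v_{jk}$ and use $\EB[v_{jk}v_{jl}]=\sigma_v^2\delta_{kl}$, so only the diagonal $k=l$ terms survive and $\EB_{\V}\|\W_{\textrm{old}}\V^T\|_F^2=\sigma_v^2 r\|\W_{\textrm{old}}\|_F^2$. Dividing, the common factor $\sigma_v^2 r\|\W_{\textrm{old}}\|_F^2$ cancels and leaves $\EB_{\V}[E]\ge\frac{\din-1}{s}\,\EB_{\V}\|\W_{\textrm{old}}\V^T\|_F^2$, i.e. the claimed $\Omega(\din/s)$ factor.

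The contrast between the $\din(\din-1)$ off-diagonal pairs appearing in $E$ and the $\din$ diagonal terms in $\|\W_{\textrm{old}}\V^T\|_F^2$ is exactly what produces the $(\din-1)/s$ blow-up: sketching spreads the estimation error over all off-diagonal column interactions, whereas the signal $\|\W_{\textrm{old}}\V^T\|_F^2$ feeds only on the diagonal ones. I expect the main obstacle to be rigor rather than algebra. The cross terms are \emph{not} sign-definite, so one cannot lower bound $E$ term-by-term by the square terms alone — the cancellation genuinely relies on the zero-mean, independence structure, and without it the off-diagonal interactions could conspire to shrink $E$. A secondary subtlety is that the corollary compares the sketch-expectation $E$, a deterministic function of $\W_{\textrm{old}},\W_{\textrm{new}},\V$, against the deterministic $\|\W_{\textrm{old}}\V^T\|_F^2$; to make the $\Omega$ rigorous one must either read the statement in expectation over the IID entries (where the identities above are exact) or invoke concentration of $E$ and of $\|\W_{\textrm{old}}\V^T\|_F^2$ about their means, which is where the IID-ness of $\W_{\textrm{old}}$ enters and where the hidden constant in $\Omega$ is absorbed.
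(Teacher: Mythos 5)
Your proof is correct as far as it goes, and its skeleton is the same as the paper's: the paper derives Corollary~\ref{cor:privacy} by taking the exact entrywise formula of Lemma~\ref{lem:countsketch1} (equivalently, Theorem~\ref{thm:privacy}), dropping the nonnegative $\W_{\textrm{new}}$ contribution, and averaging over the IID entries (Lemma~\ref{lem:countsketch2}); you do exactly this. The differences are in how the averaging is done. The paper averages over the entries of \emph{both} $\W_{\textrm{old}}$ and $\V$, allowing arbitrary means $\mu_A,\mu_B$, so its cross terms do not vanish but instead have nonnegative expectation $\mu_A^2\mu_B^2$; you average over $\V$ alone under the extra hypothesis $\EB[v_{jk}]=0$, which kills the cross terms identically, keeps $\W_{\textrm{old}}$ deterministic, and (as you correctly note) renders the IID assumption on $\W_{\textrm{old}}$ unnecessary for the mean computation. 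Your execution is also the more careful one at the final comparison: you compare $\EB_{\V}$ of both sides exactly, using $\EB_{\V}\big\|\W_{\textrm{old}}\V^T\big\|_F^2=\sigma_v^2\, r\,\|\W_{\textrm{old}}\|_F^2$, whereas the paper silently replaces the random quantity $\|\A\B^T\|_F^2$ by $\Theta\big(mnd(\mu_A^2+\sigma_A^2)(\mu_B^2+\sigma_B^2)\big)$, a step that is only valid when $d\,\mu_A^2\mu_B^2 = O\big((\mu_A^2+\sigma_A^2)(\mu_B^2+\sigma_B^2)\big)$.

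The one caveat is that your centering assumption $\EB[v_{jk}]=0$ is not in the statement of the corollary, so strictly speaking you prove a special case. But this is a defect of the statement rather than of your argument: if the entries of $\W_{\textrm{old}}$ and of $\V$ are IID with nonzero means, then $\EB\big\|\W_{\textrm{old}}\V^T\big\|_F^2=\Theta\big(\dout\, r\, \din^2\,\mu_A^2\mu_B^2\big)$ while the sketching error is only $\Theta\big(\tfrac{\dout\, r\, \din^2}{s}\big)$ times a constant in $\din$, so the error-to-signal ratio degrades to $\Theta(1/s)$ and the claimed $\Omega(\din/s)$ fails. Some centering (or near-centering) hypothesis is therefore genuinely needed; your proof makes it explicit, while the paper's proof of Lemma~\ref{lem:countsketch2} buries it in the loose final $\Theta(\cdot)$ identification. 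Your closing remark --- that the $\Omega(\cdot)$ statement compares an expectation against a random quantity and so must either be read in expectation or be backed by concentration --- is likewise a real issue that the paper glosses over rather than resolves.
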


Since the magnitude of $\De$ is much smaller than $\W$, especially when $\W$ is close to a stationary point,  $ \| \W \V^T \|_F^2$ is typically greater than $\| \De \V^T \|_F^2$.
Thus, $\EB \| \widehat{\De}  \V^T - \De \V^T \|_F^2$ is typically bigger than $\| \De \V^T \|_F^2$, which implies that using $\widehat{\De}$ is no better than all-zeros or random guessing.

\begin{table*}[h]
	\centering
	\caption {\label{tab:ACC_sketch} Experiments on MNIST.
		The table shows the rounds of communications for attaining the test accuracy.
		Here, $c$ is the participation ratio of \texttt{FedAvg}, that is, in each round, only a fraction of clients participate in the training.} 
	\vspace{2mm}
	\begin{small}
	\begin{tabular}{>{\centering\arraybackslash}p{0.16\linewidth}>{\centering\arraybackslash}p{0.1\linewidth}>{\centering\arraybackslash}p{0.1\linewidth}>{\centering\arraybackslash}p{0.1\linewidth}>{\centering\arraybackslash}p{0.1\linewidth}>{\centering\arraybackslash}p{0.1\linewidth}>{\centering\arraybackslash}p{0.12\linewidth}}
		\toprule
		\multirow{2}{*}{Models}& \multirow{2}{*}{Accuracy} & \multicolumn{5}{c}{Communication Rounds} \\
		& &$c = 1\%$ &$c = 10\%$ &$c=20\%$ &$c=50\%$ &$c = 100\%$\\\midrule
		MLP & $0.97$  & 222 & 96 & 84 & 83 & 82 \\
		MLP-Sketch  & $0.97$ & 572 & 322 & 308 & 298 & 287 \\  
		CNN & $0.99$ & 462 & 309 & 97 & 91 & 31\\
		CNN-Sketch & $0.99$ & 636 & 176 & 189 & 170 & 174\\\bottomrule
	\end{tabular}
	\end{small}
\end{table*}

\subsection{Understanding DBCL from Optimization Perspective} \label{sec:theory:optimization}

We give an explanation of DBCL from optimization perspective.
Let us consider the generalized linear model:
\begin{small}
	\begin{equation} \label{eq:optimization2}
	\argmin_{\w} \; \bigg\{ {f} (\w) \: \triangleq \: \frac{1}{n} \sum_{j=1}^n \ell \big( \x_i^T \w , \, y_j\big)  \bigg\} ,
	\end{equation}
\end{small}%
where $(\x_1 , y_1) , \cdots , (\x_n , y_n)$ are the training samples and $\ell(\cdot , \cdot)$ is the loss function.
If we apply sketching to a generalized linear model, then the training will be solving the following problem:
\begin{small}
\begin{equation} \label{eq:optimization1}
\argmin_{\w} \; \bigg\{ \tilde{f} (\w) \: \triangleq \: \EB_{\S} \bigg[ \frac{1}{n} \sum_{j=1}^n \ell \big( \x_i^T \S \S^T \w , \, y_j\big) \bigg] \bigg\}.
\end{equation}
\end{small}%
Note that \eqref{eq:optimization1} is different from \eqref{eq:optimization2}.
If $\S$ is a uniform sampling matrix, then \eqref{eq:optimization1} will be empirical risk minimization with dropout.
Prior work \cite{wager2013dropout} proved that dropout is equivalent to adaptive regularization which can alleviate overfitting.
Random projections such as CountSketch have the same properties as uniform sampling \cite{woodruff2014sketching}, and thus the role of random sketching in \eqref{eq:optimization1} can be thought of as adaptive regularization.
This is why DBCL does not hinder prediction accuracy at all.

\section{Experiments} \label{sec:exp}

We conduct experiments to demonstrate that first, DBCL does not harm test accuracy, second, DBCL does not increase the communication cost too much, and third, DBCL can defend client-side gradient-based attacks.

\subsection{Experiment Setting} \label{sec:exp:setting}

Our method and the compared methods are implemented using PyTorch.
The experiments are conducted on a server with 4 NVIDIA GeForce Titan V GPUs, 2 Xeon Gold 6134 CPUs, and 192 GB memory.
We follow the settings of the relevant papers to perform comparisons.

Three datasets are used in the experiments.
MNIST has 60,000 training images and 10,000 test images; each image is $28\times 28$.
CIFAR-10 has 50,000 training images and 10,000 test images; each image is $32\times 32 \times 3$.
Labeled Faces In the Wild (LFW) has 13,233 faces of 5,749 individuals; each face is a $64 \times 47 \times 3$ color image. 
\vspace{-2mm}

\begin{figure*}[htbp]
  \mbox{}\hfill
  \begin{minipage}[t]{.64\linewidth}
    \centering
    \subfigure[Test loss.]{\includegraphics[width=0.48\textwidth]{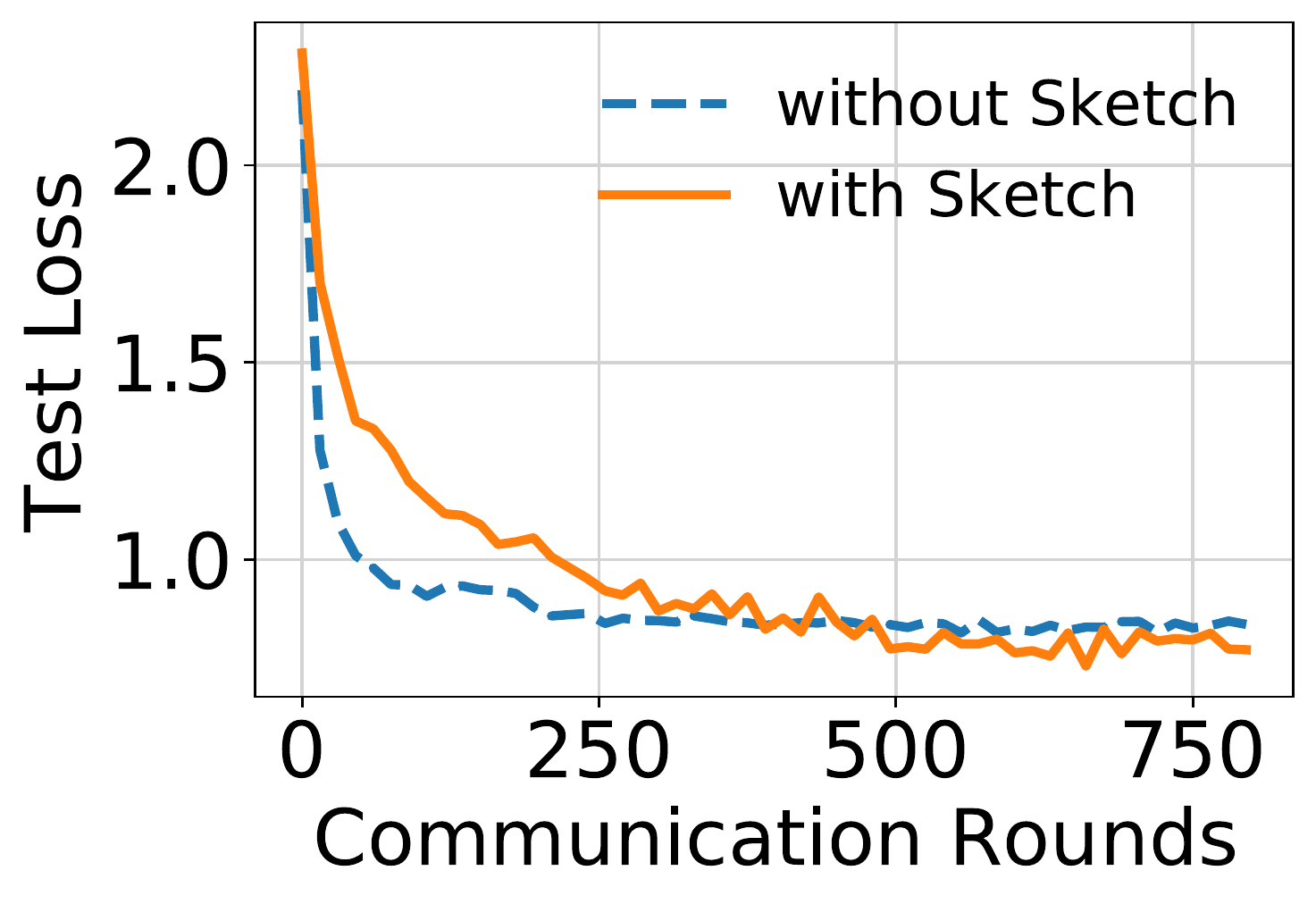}}
    ~~~
    \subfigure[Test Accuracy.]{\includegraphics[width=0.48\textwidth]{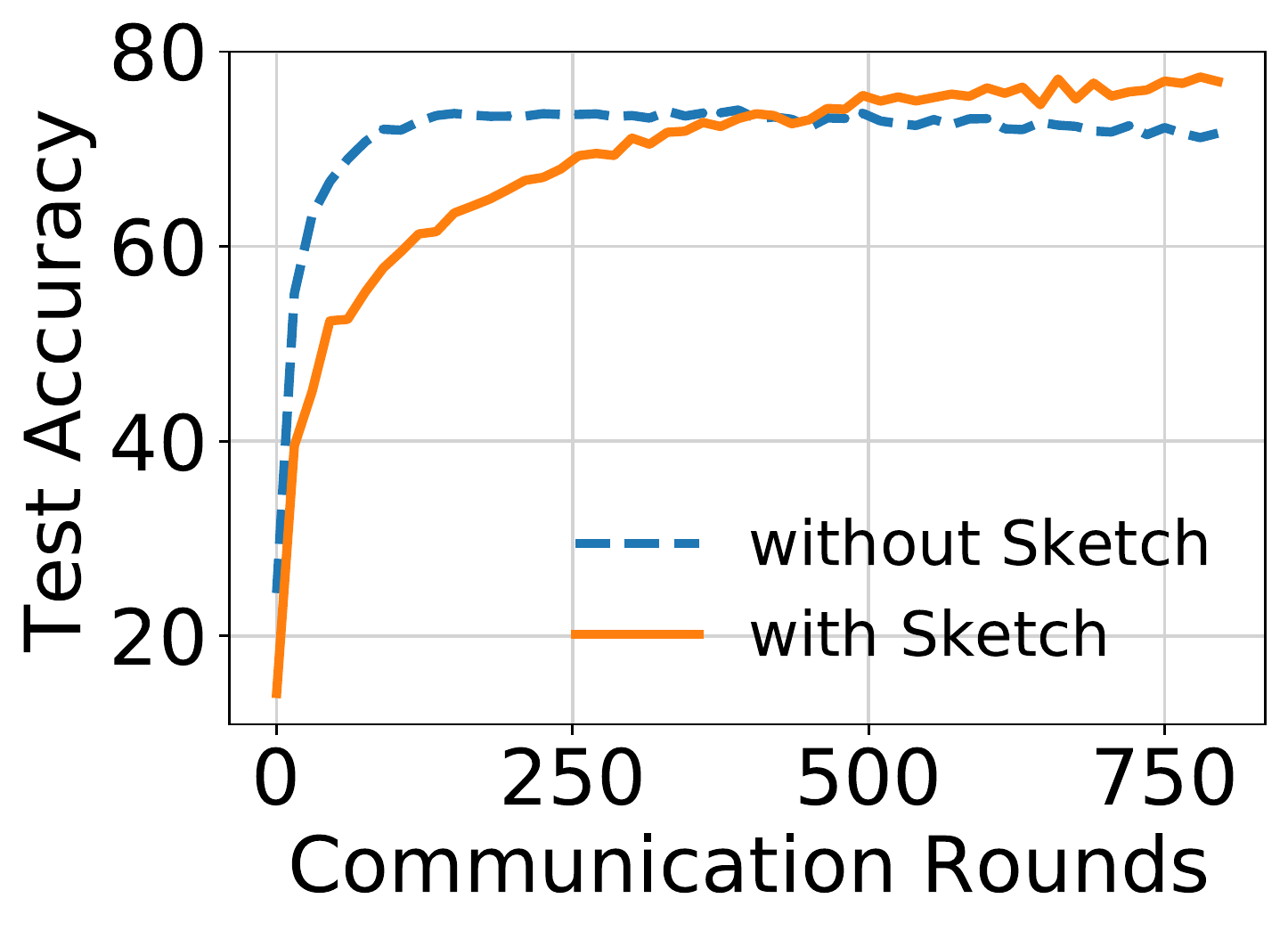}}
    \vspace{-3mm}
    \caption{Experiment on CIFAR-10 dataset. 
		The test accuracy do not match the state-of-the-art because the CNN is small and we do not use advanced tricks; we follow the settings of the seminal work \cite{mcmahan2017communication}.}
    \label{fig:CNN_loss_acc}
  \end{minipage}
    ~~~~~~~~
  \begin{minipage}[t]{.31\linewidth}
    \centering{
    \vspace{2mm}
    {\includegraphics[width=0.96\textwidth]{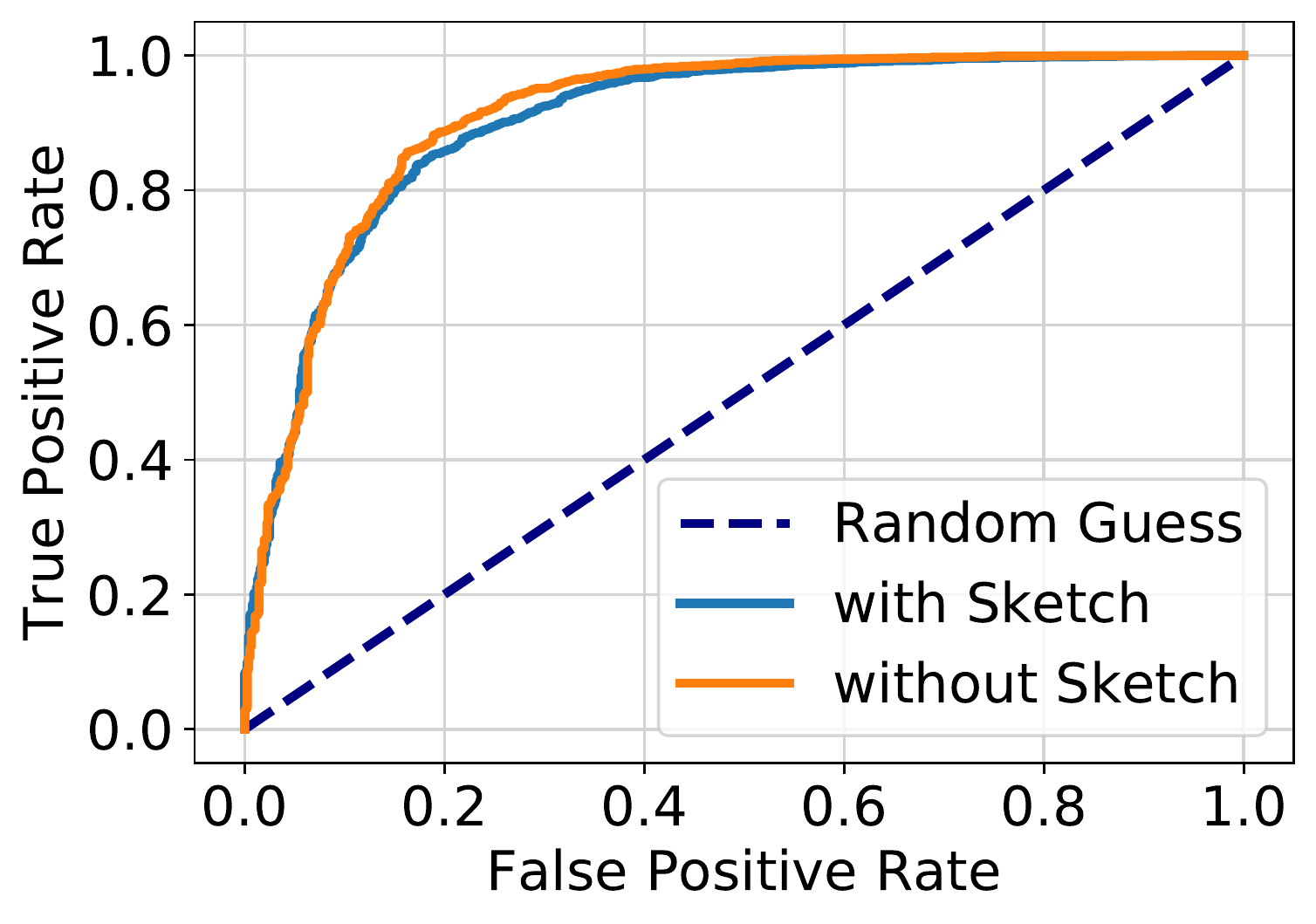}}}
    \vspace{2mm}
	\caption{Gender classification on the LWF dataset. }
    \label{fig:gender}
  \end{minipage}
  \hfill\mbox{}
\end{figure*}

\subsection{Accuracy and Efficiency} \label{sec:exp:acc}

We conduct experiments on the MNIST and CIFAR-10 datasets to show that first, DBCL does not hinder prediction accuracy, and second, it does not much increase the communication cost.
We follow the setting of \cite{mcmahan2017communication}.
The learning rates are tuned to optimize the convergence rate.

\paragraph{MNIST classification.}
We build a multilayer perceptron (MLP) and a convolutional neural network (CNN) for the multi-class classification task.
The MLP has 3 dense layers:
$\textsf{Dense} (200)$
$\Rightarrow$
\textsf{ReLU}
$\Rightarrow$
$\textsf{Dense} (200)$
$\Rightarrow$
\textsf{ReLU}
$\Rightarrow$
$\textsf{Dense} (10)$
$\Rightarrow$
\textsf{Softmax}.
The CNN has 2 convolutional layers and 2 dense layers:
$\textsf{Conv} (32, 5\times 5)$
$\Rightarrow$
\textsf{ReLU}
$\Rightarrow$
$\textsf{MaxPool} (2\times 2)$
$\Rightarrow$
$\textsf{Conv} (64, 5\times 5)$
$\Rightarrow$
\textsf{ReLU}
$\Rightarrow$
$\textsf{MaxPool} (2\times 2)$
$\Rightarrow$
\textsf{Flatten}
$\Rightarrow$
$\textsf{Dense} (512)$
$\Rightarrow$
\textsf{ReLU}
$\Rightarrow$
$\textsf{Dense} (10)$
$\Rightarrow$
\textsf{Softmax}.

We use Federated Averaging (\texttt{FedAvg}) to train the MLP and CNN.
The data are partitioned among $100$ (virtual) clients uniformly at random.
Between two communications, \texttt{FedAvg} performs local computation for 1 epoch (for MLP) or 5 epochs (for CNN).
The batch size of local SGD is set to $10$.

Sketching is applied to all the dense and convolutional layers except the output layer.
We set the sketch size to $s = \din / 2$; thus, the per-communication word complexity is reduced by half.
With sketching, the MLP and CNN are trained by \texttt{FedAvg} under the same setting.

We show the experimental results in Table~\ref{tab:ACC_sketch}.
Trained by \texttt{FedAvg}, the small MLP can only reach 97\% test accuracy, while the CNN can obtain 99\% test accuracy.
In this set of experiments, sketching does not hinder test accuracy at all.
We show the rounds of communications for attaining the test accuracies.
For the MLP, sketching needs 2.6x $\sim$ 3.6x rounds of communications to converge.
For the CNN, sketching needs 0.6x $\sim$ 5.6x rounds of communications to converge.
Using sketching, the per-communication word complexity is reduced to 0.5x.

\paragraph{CIFAR-10 classification.}
We build a CNN with 3 convolutional layers and 2 dense layers:
$\textsf{Conv} (32, 5\times 5)$
$\Rightarrow$
\textsf{ReLU}
$\Rightarrow$
$\textsf{Conv} (64, 5\times 5)$
$\Rightarrow$
\textsf{ReLU}
$\Rightarrow$
$\textsf{MaxPool} (2\times 2)$
$\Rightarrow$
$\textsf{Conv} (128, 5\times 5)$
$\Rightarrow$
\textsf{ReLU}
$\Rightarrow$
$\textsf{MaxPool} (2\times 2)$
$\Rightarrow$
\textsf{Flatten}
$\Rightarrow$
$\textsf{Dense} (200)$
$\Rightarrow$
\textsf{ReLU}
$\Rightarrow$
$\textsf{Dense} (10)$
$\Rightarrow$
\textsf{Softmax}.


The CNN is also trained using \texttt{FedAvg}.
The data are partitioned among $100$ clients.
We set the participation ratio to $c=10\% $, that is, each time only $10\% $ uniformly sampled clients participate in the training.
Between two communications, \texttt{FedAvg} performs local computation for 5 epochs.
The batch size of local SGD is set to $50$.
We follow the settings of \cite{mcmahan2017communication}, so we do not use tricks such as data augmentation, batch normalization, skip connection, etc.

Figure~\ref{fig:CNN_loss_acc} shows the convergence curves.
Using sketching does not hinder the test accuracy at all; on the contrary, it marginally improves the test accuracy.
The reason is likely that sketching is an adaptive regularization similar to dropout; see the discussions in Section~\ref{sec:theory:optimization}.


\paragraph{Binary classification on imbalanced data.}
Following \cite{melis2019exploiting}, we conduct binary classification experiments on a subset of the LFW dataset.
We use {$8,150$} faces for training and $3,400$ for test.
The task is gender prediction.
We build a CNN with 3 convolutional layers and 3 dense layers:
$\textsf{Conv} (64, 3\times 3)$
$\Rightarrow$
\textsf{ReLU}
$\Rightarrow$
$\textsf{MaxPool} (2\times 2)$
$\Rightarrow$
$\textsf{Conv} (64, 3\times 3)$
$\Rightarrow$
\textsf{ReLU}
$\Rightarrow$
$\textsf{MaxPool} (2\times 2)$
$\Rightarrow$
$\textsf{Conv} (128, 3\times 3)$
$\Rightarrow$
\textsf{ReLU}
$\Rightarrow$
$\textsf{MaxPool} (2\times 2)$
$\Rightarrow$
\textsf{Flatten}
$\Rightarrow$
$\textsf{Dense} (32)$
$\Rightarrow$
\textsf{ReLU}
$\Rightarrow$
$\textsf{Dense} (32)$
$\Rightarrow$
\textsf{ReLU}
$\Rightarrow$
$\textsf{Dense} (1)$
$\Rightarrow$
\textsf{Sigmoid}.
We apply sketching to all the convolutional and dense layers except the output layer.
The model is trained by distributed SGD (2 clients and 1 server) with a learning rate of $0.01$ and a batch size of $32$.


The dataset is class-imbalanced: {8957} are males, and {2593} are females.
We therefore use ROC curves, instead of classification accuracy, for evaluating the classification.
In Figure~\ref{fig:gender}, we plot the ROC curves to compare the standard CNN and the sketched one. 
The two ROC curves are almost the same.



\begin{figure*}[!t]
	\begin{center}
		\includegraphics[width=0.99\textwidth]{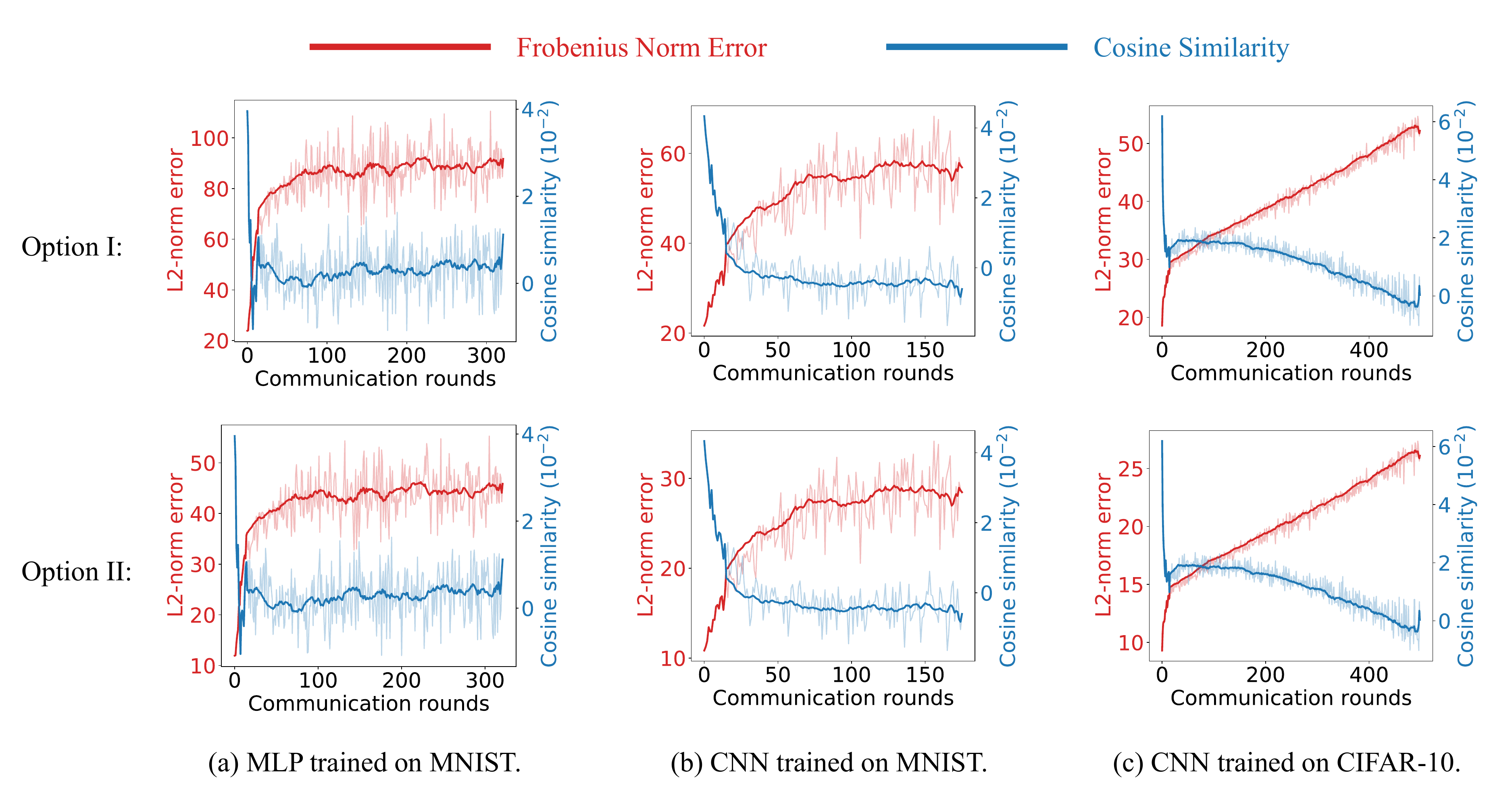}
		\vspace{-3mm}
		\caption{The x-axis is the communication rounds. The y-axes are Frobenius norm errors (red) and cosine similarities (blue).
		The figures show that the estimated gradient, $\widehat{\De}$, is far from the true gradient, $\De$, which means our defense works.
		}
		\label{fig:verify_bounds}
	\end{center}
\end{figure*}

\subsection{Defending Gradient-Based Attacks} \label{sec:exp:defense}

We empirically study whether DBCL can defend the gradient-based attacks of \cite{melis2019exploiting} and \cite{zhu2019deep}.
The details of experiment setting are in the appendix.

\paragraph{Gradient estimation.}
To perform gradient-based attacks, a client needs to (approximately) know the gradient (or other updating directions).
We discuss in Section~\ref{sec:theory:threat} how a client can estimate the updating directions.
We empirically study two approaches:
\begin{align*}
    \textrm{Option I:} \quad 
    & \widehat{\De} = \W_{\textrm{old}} \S_{\textrm{old}} \S_{\textrm{old}}^T - \W_{\textrm{new}} \S_{\textrm{new}} \S_{\textrm{new}}^T ,\\
    \textrm{Option II:} \quad 
    & \widehat{\De} = \W_{\textrm{old}} \S_{\textrm{old}} \S_{\textrm{old}}^\dag - \W_{\textrm{new}} \S_{\textrm{new}} \S_{\textrm{new}}^\dag .
\end{align*}
Here, $\A^\dag$ means the Moore-Penrose inverse of matrix $\A$.
Let $\De = \W_{\textrm{old}}  - \W_{\textrm{new}} $ be the true updating direction.
We evaluate the approximation quality using two metrics:
\begin{align*}
    \textrm{The $\ell_2$-norm error:} \quad 
    & \big\| \vect \big( \widehat{\De}  - \De \big) \big\|_2 \, \big/ \, \big\| \vect \big( \De \big) \big\|_2 , \\
    \textrm{Cosine similarity:} \quad 
    & \big\langle \, \vect \big( \widehat{\De}  \big) \, , \,  \vect \big( \De \big) \, \big\rangle .
\end{align*}
If $\widehat{\De} $ is far from $\De$, which means our defense is effective, then the $\ell_2$ error is big, and the cosine similarity is small.

Figure~\ref{fig:verify_bounds} plots the quality of gradient estimation.
The experiment settings are the same as Section~\ref{sec:exp:acc}; the participation ratio is set to $10\%$.
The results show that $\widehat{\De}$ is entirely different from $\De$, which means our defense works.
Equation \eqref{eq:implicit_noise} implies that as the magnitude of $\De$ decreases, $\widehat{\De}$ is dominated by noise, and thus the error $\frac{ \| \vect ( \widehat{\De}  - \De ) \|_2 }{ \| \vect (\De ) \|_2  }$ gets bigger.
In our experiments, as the communication rounds increase, the algorithm tends to converge, the magnitude of $\De$ decreases, and the error increases.
The empirical observations verify our theories.

\paragraph{Defending the property inference attack (PIA) of \citet{melis2019exploiting}.}
We conduct experiments on the LFW dataset.
The task of collaborative learning is the gender classification under the same setting as Section~\ref{sec:exp:acc}.
The attacker seeks to infer whether a single batch of photos in the victim's private dataset contain Africans or not.
We use one server and two clients, which is the easiest for the attacker.
\begin{itemize}
    \item 
    Let one client be the attacker and the other client be the victim.
    Without sketching, the AUC is 1.0, which means the attacker is always right.
    With sketching applied, the AUC is 0.5, which means the performance is the same as random guess.
\vspace{-1mm}
    \item
    Let the server be the attacker and one client be the victim.
    Without sketching, the AUC is 1.0, which means the attacker succeeds.
    With sketching applied, the AUC is $0.726$.
    Our defense makes server-side attack much less effective, but users' privacy can still leak.
\end{itemize}

\paragraph{Defending the gradient-matching attack of \citet{zhu2019deep}.}
The attack seeks to recover the victims' data using model parameters and gradients.
They seek to find a batch of images by optimization so that the resulting gradient matches the observed gradient of the victim.
We use the same CNNs as \cite{zhu2019deep} to conduct experiments on the MNIST and CIFAR-10 datasets.
Without using sketching, the gradient-matching attack very well recovers the images.
For both client-side attacks and server-side attacks, if sketching is applied to all except the output layer, then the recovered images are just like random noise.
The experiments show that we can defend gradient-matching attack performed by both server and client.

\section{Related Work} \label{sec:related}

Cryptography approaches such as secure aggregation \cite{bonawitz2017practical}, homomorphic encryption \cite{aono2017privacy,giacomelli2018privacy,gilad2016cryptonets,liu2018secure,zhang2018gelu}, 
Yao’s garbled circuit protocol~\cite{rouhani2018deepsecure}, and many other methods \cite{yuan2013privacy,zhang2018survey} can also improve the security of collaborative learning.
Generative models such as \cite{chen2018differentially} can also improve privacy; however, they hinder the accuracy and efficiency, and their tuning and deployment are nontrivial.
All the mentioned defenses are not competitive methods of our DBCL; instead, they can be combined with DBCL to defend more attacks.

Our methodology is based on matrix sketching \cite{johnson1984extensions,drineas2008cur,halko2011ramdom,mahoney2011ramdomized,woodruff2014sketching,drineas2016randnla}.
Sketching has been applied to achieve differential privacy \cite{blocki2012johnson,kenthapadi2012privacy,li2019privacy}.
It has been shown that to protect privacy, matrix sketching has the same effect as injecting random noise.
Our method is developed based on the connection between sketching \cite{woodruff2014sketching} and dropout training \cite{srivastava2014dropout};
in particular, if $\S$ is uniform sampling, then DBCL is essentially dropout.
Our approach is similar to the contemporaneous work \cite{khaled2019gradient} which is developed for computational benefits.

Decentralized learning, that is, the clients perform peer-to-peer communication without a central server, is an alternative to federated learning and has received much attention in recent years \cite{colin2016gossip,koloskova2019decentralized,lan2017communication,luo2019heterogeneity,ram2010asynchronous,sirb2016consensus,tang2018d,wang2019matcha,yuan2016convergence}.
The attacks of \citet{melis2019exploiting,zhu2019deep} can be applied to decentralized learning, and DBCL can defend the attacks under the decentralized setting.
We discuss decentralized learning in the appendix.
The attacks and defense under the decentralized setting will be our future work.

\section{Conclusions}

Collaborative learning enables multiple parties to jointly train a model without data sharing.
Unfortunately, standard distributed optimization algorithms can easily leak participants' privacy.
We proposed Double-Blind Collaborative Learning (DBCL) for defending gradient-based attacks which are the most effective privacy inference methods.
We showed that DBCL can defeat gradient-based attacks conducted by malicious clients.
Admittedly, DBCL can not defend all kinds of attacks; for example, if the server is malicious, then the attack of \cite{melis2019exploiting} still works, but much less effectively.
While it improves privacy, DBCL does not hurt test accuracy at all and does not much increase the cost of training.
DBCL is easy to use and does not need extra tuning.
Our future work will combine DBCL with cryptographic methods such as homomorphic encryption and secret sharing so that neither client nor server can infer users' privacy.

\section*{Acknowledgements}

The author thanks Michael Mahoney, Richard Peng, Peter Richt{\'a}rik, and David Woodruff for their helpful suggestions.

%
%
%

\bibliography{bib/ml,bib/distributed,bib/random,bib/decentralized,bib/optimization}
\bibliographystyle{icml2021}

\newpage

\onecolumn

\appendix

\section{Details of Experimental Setting} \label{sec:exp2}

\subsection{Server-Side Attacks}

Throughout the paper, we focus on client-side attacks.
We empirically show that DBCL can perfectly defend client side attacks.
Here we briefly discuss server-side attacks.
We assume first, the server is honest but curious, second, the server holds a subset of data, and third, the server knows the true model parameters $\W$, the sketched gradients, and the sketch matrix, $\S$.
The assumptions, especially the third, make it easy to attack but hard to defend.
Let $\Gam_i$ be the sketched gradient of the $i$-th client.
The server uses $\Gam_i \S^T$ to approximate the true gradient; see Eqn \eqref{eq:grad_sketch}.
The server uses $\Gam_i \S^T$ for privacy inference.
\vspace{-2mm}

\subsection{Defending the Property Inference Attack (PIA) of \citet{melis2019exploiting}}

We conduct experiments on the LFW dataset by following the settings of \cite{melis2019exploiting}.
We use one server and two clients.
The task of collaborative learning is the gender classification discussed in Section~\ref{sec:exp:acc}.
The two clients collaboratively train the model on the gender classification task for $20,000$ iterations.

During the training, the attacker seeks to infer whether a single batch of photos in the victim's private dataset contain Africans or not.
We consider two types of attackers.
First, one client is the attacker, and one client is the victim.
Section, the server is the attacker, and one client is the victim.
In the latter case, the server locally holds a subset of data; the sample size is the same as the clients.

During the $1,000$th to the $20,000$th iterations of the training, the attacker estimate the gradients computed on the victim's private data.
Taking the estimated gradients are input features, a random forest performs property inference attack, that is, infer whether a batch contains Africans or not.
In each iteration, the attacker trains the random forest using its local data.
More specifically, the attacker computes $2$ gradients with property (i.e., contain Africans) and $8$ gradients without property, and the attacker takes the gradients as input features for training the random forest
The attacker uses a total of $190,000$ gradients for training the random forest and $19,000$ gradients for testing.

After training the random forest, the attacker uses the random forest for binary classification.
The task is to infer whether a batch of the victim's (a client) private images contain Africans or not.
The test data are class-imbalanced: only $3,800$ gradients are from images of Africans, whereas the rest $15,200$ are from non-Africans.
We thus use AUC as the evaluation metric.
Without sketching, the AUC is $1.0$, which means the attacker can exactly tell whether a batch of client's images contain Africans or not.
Using sketching, the AUC of client-side attack is $0.5$, and the AUC of server-side attack is $0.726$.

\subsection{Defending the Gradient-Matching Attack of \citet{zhu2019deep}}

Zhu \etal \cite{zhu2019deep} proposed to recover the victims' data using model parameters and gradients.
They seek to find a batch of images by optimization so that the resulting gradient matches the observed gradient of the victim.
We use the same CNNs as \cite{zhu2019deep} to conduct experiments on the MNIST and CIFAR-10 datasets.
We apply sketching to all except the output layer.

We show in Figure~\ref{fig:gradient_leakage} the results of server-side attacks.
Without sketching, the gradient-matching attack very well recovers the private data of the victim.
However, with sketching, the gradient-matching attack completely fails.
Moreover, the experiments on client-side attacks have the same results.

\begin{figure}[h]
	\centering
	\includegraphics[width=0.8\textwidth]{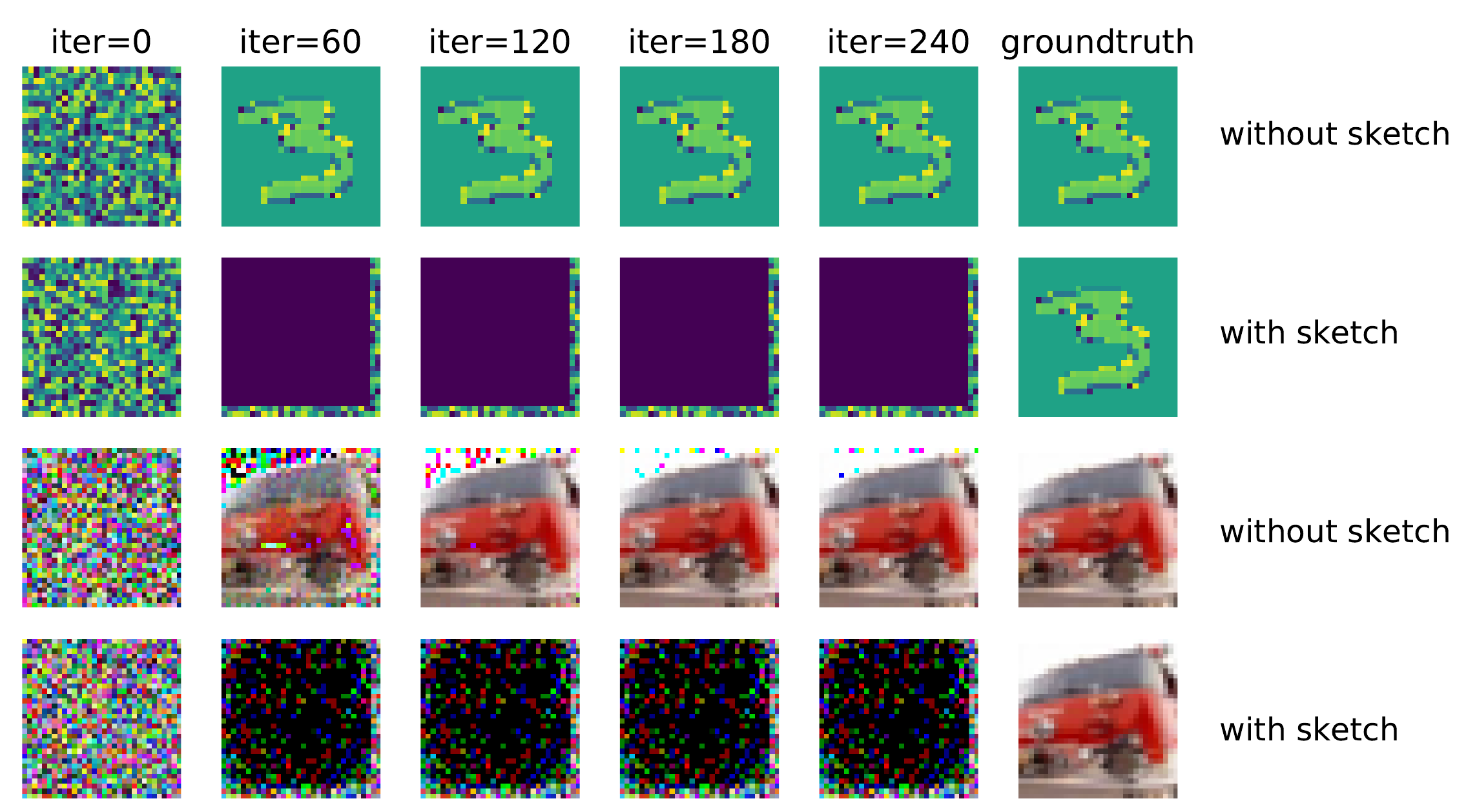}
	\caption{The images are generated by the gradient-matching attack of \cite{zhu2019deep}. 
	The attack is effective for the standard CNNs.
	Using sketching, the gradient-matching attack cannot recover the images.}
	\label{fig:gradient_leakage}
\end{figure}

%
%
%
%

\section{Algorithm Derivation} \label{sec:derive}

In this section, we derive the algorithm outlined in Section~\ref{sec:alg}.
In Section~\ref{sec:derive:dense} and \ref{sec:conv}, we apply sketching to dense layer and convolutional layer, respectively, and derive the gradients.

\subsection{Dense Layers}\label{sec:derive:dense}

For simplicity, we study the case of batch size $b=1$ for a dense layer.
Let $\x \in \RB^{1\times \din} $ be the input, $\W \in \RB^{\dout\times \din} $ be the parameter matrix, 
$\S \in \RB^{\din \times s}$ ($s < \din $) be a sketching matrix, and 
\begin{equation*}
\z \: = \: \x \S (\W \S )^T\in \RB^{1 \times \dout } 
\end{equation*}
be the output (during training).
For out-of-sample prediction, sketching is not applied, equivalently, $\S = \I_{\din}$.

In the following, we describe how to propagate gradient from the loss function back to $\x$ and $\W$.
The dependence among the variables can be depicted as
\begin{small}
	\begin{equation*}
	\begin{array}{c}
	\textsf{input} \: \longrightarrow \:  \cdots \: \longrightarrow   \\
	~
	\end{array}
	\underbrace{
		\left.
		\begin{array}{c}
		\x   \\
		\W \\
		\end{array}
		\right\}
		\: \longrightarrow \: 
		\z }_{\textsf{the studied layer}} 
	\: \longrightarrow \: 
	\cdots
	\: \longrightarrow \: 
	\textsf{loss}.
	\end{equation*}
\end{small}%

During the backpropagation, the gradients propagated to the studied layer are
\begin{equation} \label{eq:bp_grad_out}
\g \: \triangleq \: \frac{ \partial \, L }{ \partial \, \z } 
\: \in \: \RB^{1\times \dout } ,
\end{equation}
where $L$ is some loss function.
Then we further propagate the gradient from $\z$ to $\x$ and $\W$:
\begin{align}
& \frac{ \partial \, L }{ \partial \, \x } 
\: = \: \frac{ \partial \, L }{ \partial \, \z }  \,
\frac{ \partial \, \z }{ \partial \, (\x \S ) } \,
\frac{ \partial \, (\x \S ) }{ \partial \, \x } 
\: = \: \g (\W \S ) \S^T
\: \in \: \RB^{1 \times \din  } ,  \label{eq:grad_L_X_sketch}  \\
& \frac{ \partial \, L }{ \partial \, \W } 
\: = \: \g^T (\x \S) \S^T
\: \in \: \RB^{\dout \times \din } . \label{eq:grad_L_W_sketch} 
\end{align}
We prove \eqref{eq:grad_L_W_sketch}  in the following.
Let $\tilde{\x} = \x \S \in \RB^{1\times s}$ and $\widetilde{\W} = \W \S \in \RB^{\dout \times s}$.
Let $\w_{j:}$ and $\tilde{\w}_{j:}$ be the $j$-th row of $\W$ and $\widetilde{\W}$, respectively.
It can be shown that 
\begin{equation*}
\frac{ \partial \, L }{ \partial \, \tilde{\w}_{j:} } 
\: = \: \sum_{l=1}^{\dout} \frac{ \partial \, L }{ \partial \, z_l } \,
\frac{ \partial \, z_l }{ \partial \, \tilde{\w}_{j:} } 
\: = \: \sum_{l=1}^{\dout}  \frac{ \partial \, L }{ \partial \, z_l } \,
\frac{ \partial \,( \tilde{\x} \tilde{\w}_{l:}^T) }{ \partial \, \tilde{\w}_{j:} } 
\: = \: \frac{ \partial \, L }{ \partial \, z_j } \,
\frac{ \partial \,( \tilde{\x} \tilde{\w}_{j:}^T) }{ \partial \, \tilde{\w}_{j:} } 
\: = \: g_j \, \tilde{\x}
\: \in \: \RB^{1\times s} .
\end{equation*}
Thus, $\tilde{\w}_{j:} = \w_{j:} \S \in \RB^{1\times s}$;
moreover, $\tilde{\w}_{j:}$ is independent of $\w_{l:}$ if $j\neq l$.
It follows that
\begin{equation*}
\frac{ \partial \, L }{ \partial \, {\w}_{j:} } 
\: = \: \frac{ \partial \, L }{ \partial \, \tilde{\w}_{j:} } \,
\frac{ \partial \, \tilde{\w}_{j:} }{ \partial \, {\w}_{j:} } 
\: = \: g_j \, \tilde{\x} \, \S^T
\: \in \: \RB^{1\times \din} .
\end{equation*}
Thus
\begin{equation*}
\frac{ \partial \, L }{ \partial \, \W }
\: = \:\g^T \, \tilde{\x} \, \S^T
\: = \: \g^T (\x \S) \S^T
\: \in \: \RB^{\dout \times \din} .
\end{equation*}

\subsection{Extension to Convolutional Layers} \label{sec:conv}

Let $\X$ be a $d_1 \times d_2 \times d_3$ tensor and $\K$ be a $k_1\times k_2 \times d_3$ kernel.
The convolution $\X * \K$ outputs a $d_1   \times d_2  $ matrix (assume zero-padding is used).
The convolution can be equivalently written as matrix-vector multiplication in the following way.

We segment $\X$ to many patches of shape $k_1\times k_2\times d_3$  and then reshape every patch to a $\din \triangleq k_1 k_2 d_3$-dimensional vector.
Let $\pp_i$ be the $i$-th patch (vector).
Tensor $\X$ has $q \triangleq d_1  d_2$ such patches.
Let
\begin{equation*}
\PP \triangleq [\pp_1 , \cdots , \pp_q]^T \in \RB^{q \times \din}
\end{equation*}
be the concatenation of the patches. 
Let $\w \in \RB^{\din}$ be the vectorization of the kernel $\K \in \RB^{k_1\times k_2 \times d_3}$.
The matrix-vector product, $\z = \PP \w \in \RB^q$, is indeed the vectorization of the convolution $\X * \K$.

In practice, we typically use multiple kernels for the convolution; 
let $\W \triangleq [\w_1 , \cdots , \w_{\dout}]^T \in \RB^{\dout \times \din}$ be the concatenation of $\dout$ different (vectorized) kernels.
In this way, the convolution of $\X$ with $\dout$ different kernels, which outputs a $d_1  \times d_2  \times r$ tensor, is the reshape of $\X \W^T \in \RB^{q\times \dout}$.

We show in the above that tensor convolution can be equivalently expressed as matrix-matrix multiplication.
Therefore, we can apply matrix sketching to convolutional layers in the same way as the dense layer.
Specifically, let $\S$ be a $\din \times s$ random sketching matrix.
Then $\X \S (\W \S)^T$ is an approximation to $\X \W^T$, and the backpropagation is accordingly derived using matrix differentiation.

\section{Proofs}

In this section, we prove Theorem~\ref{thm:privacy} and Corollary~\ref{cor:privacy}.
Theorem~\ref{thm:privacy} follows from Lemmas~\ref{lemma:countsketch0} and \ref{lem:countsketch1}.
Corollary~\ref{cor:privacy} follows from Lemma~\ref{lem:countsketch2}.
Theorem~\ref{thm:privacy2} is a trivial consequence of Theorem~\ref{thm:privacy}.

\begin{lemma} \label{lemma:countsketch0}
	Let $\S_{\textrm{old}} $ and $\S_{\textrm{new}} $ be independent CountSketch matrices.
	For any matrix $\V$ independent of $\S_{\textrm{old}} $ and $\S_{\textrm{new}} $, the following identity holds:
	\begin{align*}
	& \EB \big\| \widehat{\De}  \V^T - \De \V^T \big\|_F^2 \\
	& = \: \EB \big\| \W_{\textrm{old}}  \V^T -  \W_{\textrm{old}} \S_{\textrm{old}} \S_{\textrm{old}}^T \V^T \big\|_F^2
	+ \EB \big\| \W_{\textrm{new}}  \V^T -  \W_{\textrm{new}} \S_{\textrm{new}} \S_{\textrm{new}}^T \V^T \big\|_F^2 ,
	\end{align*}
	where the expectation is taken w.r.t.\ the random sketching matrices $\S_{\textrm{old}}$ and $\S_{\textrm{new}}$.
\end{lemma}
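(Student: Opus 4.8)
The plan is to prove this identity by a bias-variance style decomposition, where the crucial mechanism is that a cross-term vanishes thanks to the independence of $\S_{\textrm{old}}$ and $\S_{\textrm{new}}$ combined with the unbiasedness of CountSketch recorded in the preliminaries, namely $\EB[\W \S \S^T] = \W$.

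First I would rewrite the error $\widehat{\De} - \De$ as a difference of two centered quantities. Recalling $\widehat{\De} = \W_{\textrm{old}} \S_{\textrm{old}} \S_{\textrm{old}}^T - \W_{\textrm{new}} \S_{\textrm{new}} \S_{\textrm{new}}^T$ and $\De = \W_{\textrm{old}} - \W_{\textrm{new}}$, I introduce $\A \triangleq \W_{\textrm{old}} \S_{\textrm{old}} \S_{\textrm{old}}^T - \W_{\textrm{old}}$ and $\B \triangleq \W_{\textrm{new}} \S_{\textrm{new}} \S_{\textrm{new}}^T - \W_{\textrm{new}}$, so that $\widehat{\De} - \De = \A - \B$. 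Right-multiplying by $\V^T$ and expanding the squared Frobenius norm yields $\| (\A - \B) \V^T \|_F^2 = \| \A \V^T \|_F^2 - 2 \langle \A \V^T, \B \V^T \rangle + \| \B \V^T \|_F^2$. Substituting back the definitions of $\A$ and $\B$ shows that $\EB\| \A \V^T \|_F^2$ and $\EB\| \B \V^T \|_F^2$ are exactly the two terms appearing on the right-hand side of the claimed identity, so everything reduces to showing that the expected cross-term is zero.

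Next I would take expectations and kill the cross-term. Since $\A$ is a function of $\S_{\textrm{old}}$ only and $\B$ is a function of $\S_{\textrm{new}}$ only, and these two sketches are independent (and independent of the fixed $\V$), the expectation of the inner product factors as $\EB\langle \A \V^T, \B \V^T \rangle = \langle \EB[\A] \V^T, \EB[\B] \V^T \rangle$. The unbiasedness property then gives $\EB[\A] = \W_{\textrm{old}} - \W_{\textrm{old}} = \0$ and likewise $\EB[\B] = \0$, so the entire cross-term drops out. What remains is $\EB \| (\widehat{\De} - \De) \V^T \|_F^2 = \EB \| \A \V^T \|_F^2 + \EB \| \B \V^T \|_F^2$, which is precisely the asserted identity.

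There is no serious obstacle here; the only step requiring care is justifying the factorization $\EB\langle \A \V^T, \B \V^T \rangle = \langle \EB[\A] \V^T, \EB[\B] \V^T \rangle$. I would make this rigorous by writing the Frobenius inner product as a trace, $\langle \A \V^T, \B \V^T \rangle = \tr\!\big( \V \A^T \B \V^T \big)$, and pushing the expectation inside: by independence, $\EB[\A^T \B] = \EB[\A]^T \EB[\B]$ entrywise, and with $\V$ deterministic the trace is linear in this matrix. Equivalently, one can expand $\langle \cdot, \cdot \rangle$ as a sum over entries and factor each $\EB[(\A\V^T)_{ij}(\B\V^T)_{ij}]$ into a product of expectations. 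Either route makes the cross-term's vanishing transparent, and the identity follows.
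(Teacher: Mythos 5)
Your proof is correct and follows essentially the same route as the paper's: both decompose $\widehat{\De}\V^T - \De\V^T$ into a $\S_{\textrm{old}}$-only term plus a $\S_{\textrm{new}}$-only term, expand the squared Frobenius norm, and kill the cross inner product using the independence of the two sketches together with $\EB[\S\S^T] = \I$. The only cosmetic difference is that you name the centered quantities $\A$ and $\B$ and spell out the factorization of the cross-term expectation, which the paper states more tersely.
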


\begin{proof}
	Recall the definitions: $\De = \W_{\textrm{old}} - \W_{\textrm{new}}$
	and $\widehat{\De} = \W_{\textrm{old}} \S_{\textrm{old}} \S_{\textrm{old}}^T - \W_{\textrm{new}} \S_{\textrm{new}} \S_{\textrm{new}}^T$.
	Then
	\begin{align*}
	& \big\| \widehat{\De}  \V^T - \De \V^T \big\|_F^2
	\: = \: \Big\| \big( \W_{\textrm{old}} \S_{\textrm{old}} \S_{\textrm{old}}^T -  \W_{\textrm{new}} \S_{\textrm{new}} \S_{\textrm{new}}^T \big)  \V^T - \De \V^T \Big\|_F^2 \\
	& = \: \Big\| \big( \W_{\textrm{old}} \S_{\textrm{old}} \S_{\textrm{old}}^T -  \W_{\textrm{new}} \S_{\textrm{new}} \S_{\textrm{new}}^T \big)  \V^T - (\W_{\textrm{old}} - \W_{\textrm{new}})  \V^T \Big\|_F^2 \\
	& = \: \Big\|\W_{\textrm{old}}   \big( \S_{\textrm{old}} \S_{\textrm{old}}^T -  \I \big)  \V^T 
	+ \W_{\textrm{new}}   \big( \I - \S_{\textrm{new}} \S_{\textrm{new}}^T  \big) \V^T\Big\|_F^2 \\
	& = \: \big\| \W_{\textrm{old}}   \big( \S_{\textrm{old}} \S_{\textrm{old}}^T -  \I \big)  \V^T  \big\|_F^2
	+ \big\| \W_{\textrm{new}}   \big( \I - \S_{\textrm{new}} \S_{\textrm{new}}^T  \big) \V^T  \big\|_F^2 \\
	& \quad + 2 \Big\langle \W_{\textrm{old}}   \big( \S_{\textrm{old}} \S_{\textrm{old}}^T -  \I \big)  \V^T  , \;
	\W_{\textrm{new}}   \big( \I - \S_{\textrm{new}} \S_{\textrm{new}}^T  \big) \V^T \Big\rangle.
	\end{align*}
	Since $\EB [ \S_{\textrm{old}} \S_{\textrm{old}}^T - \I  ] = \0$, $\EB [\S_{\textrm{new}} \S_{\textrm{new}}^T - \I] = \0$, and $\S_{\textrm{old}}$ and $\S_{\textrm{new}}$ are independent, we have
	\begin{align*}
	\EB \Big[ \Big\langle \W_{\textrm{old}}   \big( \S_{\textrm{old}} \S_{\textrm{old}}^T -  \I \big)  \V^T  , \;
	\W_{\textrm{new}}   \big( \I - \S_{\textrm{new}} \S_{\textrm{new}}^T  \big) \V^T \Big\rangle \Big]
	\: = \:  0.
	\end{align*}
	It follows that
	\begin{align*}
	& \EB \big\| \widehat{\De}  \V^T - \De \V^T \big\|_F^2 \\
	& = \: \EB \big\|\W_{\textrm{old}}   \big( \S_{\textrm{old}} \S_{\textrm{old}}^T -  \I \big)  \V^T \big\|_F^2
	+ \EB \big\| \W_{\textrm{new}}   \big( \I - \S_{\textrm{new}} \S_{\textrm{new}}^T  \big) \V^T \big\|_F^2 ,
	\end{align*}
	by which the lemma follows.
\end{proof}

\begin{lemma} \label{lem:countsketch1}
	Let $\S $ be a ${d\times s}$ CountSketch matrix. 
	Let $\A \in \RB^{n\times d}$ and $\B \in \RB^{m\times d}$ be any non-random matrices.
	Then
	\begin{align*}
	\EB \big[ \A \S \S^T \B^T - \A \B^T \big]^2
	\: = \: \frac{1}{s} \sum_{i=1}^n \sum_{j=1}^m \bigg( \sum_{k\neq l} a_{ki}^2 b_{lj}^2 +  \sum_{k\neq l} a_{ki} b_{kj} a_{li} b_{lj} \bigg) .
	\end{align*}
\end{lemma}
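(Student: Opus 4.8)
The plan is to reduce the matrix identity to a single scalar second moment and then evaluate that moment using the independence structure of CountSketch. First I would write $\A \S \S^T \B^T - \A \B^T = \A ( \S \S^T - \I ) \B^T$, so that its $(i,j)$ entry equals $\a_i^T ( \S \S^T - \I ) \bb_j$, where $\a_i$ and $\bb_j$ are the $i$-th row of $\A$ and the $j$-th row of $\B$ (vectors in $\RB^{d}$). Expanding the Frobenius norm entrywise and using linearity of expectation gives
\[
\EB \big\| \A ( \S \S^T - \I ) \B^T \big\|_F^2 \: = \: \sum_{i=1}^n \sum_{j=1}^m \EB \big[ \big( \a_i^T ( \S \S^T - \I ) \bb_j \big)^2 \big] ,
\]
so it suffices to compute $\EB [ ( \a^T ( \S \S^T - \I ) \bb )^2 ]$ for fixed vectors $\a, \bb \in \RB^{d}$; the claimed double sum is precisely this scalar quantity summed over the rows, after identifying $a_k$ with the $k$-th entry of $\a_i$ and $b_l$ with the $l$-th entry of $\bb_j$.

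For the scalar moment I would parametrize the CountSketch by a uniform hash $h : [d] \to [s]$ and independent Rademacher signs $\sigma_1, \dots, \sigma_d$, giving $( \S \S^T )_{kl} = \sigma_k \sigma_l \, \mathbf{1}\{ h(k) = h(l) \}$. The diagonal entries are identically $1$, so $\S \S^T - \I$ is supported off the diagonal and $\a^T ( \S \S^T - \I ) \bb = \sum_{k \neq l} a_k b_l \, \sigma_k \sigma_l \, \mathbf{1}\{ h(k) = h(l) \}$. Squaring and taking expectation yields a fourfold sum over pairs $(k,l)$ and $(p,q)$ with $k \neq l$ and $p \neq q$, each summand factoring (by independence of the signs from the hash) as $\EB [ \sigma_k \sigma_l \sigma_p \sigma_q ] \cdot \EB [ \mathbf{1}\{ h(k) = h(l) \} \, \mathbf{1}\{ h(p) = h(q) \} ]$.

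The crux of the argument, and the step I expect to be most error-prone, is the case analysis on the sign expectation. Since the $\sigma$'s are independent, mean-zero, and square to one, $\EB [ \sigma_k \sigma_l \sigma_p \sigma_q ]$ vanishes unless the four indices pair up perfectly; under $k \neq l$ and $p \neq q$ this leaves exactly the two disjoint configurations $(p,q) = (k,l)$ and $(p,q) = (l,k)$, in both of which the sign factor is $1$ and the collision indicator collapses to $\mathbf{1}\{ h(k) = h(l) \}$ with expectation $\PB [ h(k) = h(l) ] = 1/s$. The first configuration contributes $\frac{1}{s} \sum_{k \neq l} a_k^2 b_l^2$ and the second contributes $\frac{1}{s} \sum_{k \neq l} a_k b_k a_l b_l$; I would verify the two configurations cannot coincide (that would force $k = l$), so there is no double counting. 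Summing over $i$ and $j$ and relabeling the contraction indices then reproduces the stated formula. I expect everything outside this pruning step to be routine bookkeeping, the only other thing to confirm being that the CountSketch collision probability is exactly $1/s$.
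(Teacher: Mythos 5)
Your proof is correct, and its skeleton --- rewriting the error as $\A(\S\S^T-\I)\B^T$, expanding the squared Frobenius norm entrywise, and reducing everything to the scalar moment $\EB\big[\big(\a^T(\S\S^T-\I)\bb\big)^2\big]$ for fixed $\a,\bb\in\RB^d$ --- is exactly the reduction the paper uses. Where you genuinely differ is at the scalar moment itself: the paper does not prove it, but quotes the identity $\EB\big[\a^T\S\S^T\bb-\a^T\bb\big]^2=\frac{1}{s}\big(\sum_{k\neq l}a_k^2b_l^2+\sum_{k\neq l}a_kb_ka_lb_l\big)$ from prior work (Pham--Pagh and Weinberger et al.), whereas you derive it from first principles by parametrizing the CountSketch as $(\S\S^T)_{kl}=\sigma_k\sigma_l\,\mathbf{1}\{h(k)=h(l)\}$, pruning the fourth-order sign moments, and using $\PB[h(k)=h(l)]=1/s$. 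Your case analysis is right: under $k\neq l$ and $p\neq q$, only the two disjoint pairings $(p,q)=(k,l)$ and $(p,q)=(l,k)$ survive, each with sign factor $1$ and collision probability $1/s$, which reproduces both terms with the exact constant; the independence of signs and hash positions that you invoke is consistent with the paper's definition of CountSketch. What each route buys: the paper's is shorter, while yours is self-contained and certifies the precise constant rather than outsourcing the one step where the cross term and the collision probability interact. One cosmetic remark: the index order in the lemma statement ($a_{ki}$, $b_{lj}$) is transposed relative to the paper's own proof ($a_{ik}$, $b_{jl}$); your identification of $a_k$ with the $k$-th entry of the $i$-th row of $\A$ matches the proof's (and the correct) reading, and it also silently repairs the paper's slip of calling $\bb_{j:}\in\RB^d$ a ``column'' of $\B\in\RB^{m\times d}$.
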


\begin{proof}
	\cite{pham2013fast,weinberger2009feature}
	showed that for any vectors $\a , \bb \in \RB^{d}$,
	\begin{align*} 
	& \EB \big[ \a^T \S \S^T \bb \big]
	\: = \: \a^T \bb, \nonumber \\
	& \EB \big[ \a^T \S \S^T \bb - \a^T \bb \big]^2
	\: = \: \frac{1}{s} \Big( \sum_{k\neq l} a_k^2 b_l^2 +  \sum_{k\neq l} a_k b_k a_l b_l \Big) .
	\nonumber 
	\end{align*}
	Let $\a_{i:} \in \RB^d$ be the $i$-th row of $\A \in \RB^{n\times d}$ and $\bb_{j:} \in \RB^d$ be the $j$-th column of $\B \in \RB^{m\times d}$.
	Then,
	\begin{align*}
	& \EB \big[ \A \S \S^T \B^T - \A \B^T \big]^2
	\: = \: \sum_{i=1}^n \sum_{j=1}^m \EB \big[ \a_{i:}^T \S \S^T \bb_{j:} - \a_{i:}^T \bb_{j:} \big]^2 \\
	& = \: \frac{1}{s} \sum_{i=1}^n \sum_{j=1}^m \bigg( \sum_{k\neq l} a_{ik}^2 b_{jl}^2 +  \sum_{k\neq l} a_{ik} b_{jk} a_{il} b_{jl} \bigg) ,
	\end{align*}
	by which the lemma follows.
\end{proof}

\begin{lemma}\label{lem:countsketch2}
	Let $\S $ be a ${d\times s}$ CountSketch matrix. 
	Assume that the entries of $\A$ are IID and that the entries of $\B$ are also IID.
	Then
	\begin{align*}
	\EB \big[ \A^T \S \S^T \B - \A^T \B \big]^2
	\: = \: \Theta \Big( \frac{d }{s}  \Big) \| \A \B^T \|_F^2  .
	\end{align*}
\end{lemma}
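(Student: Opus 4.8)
The plan is to reduce the claim to Lemma~\ref{lem:countsketch1} and then average over the random entries of $\A$ and $\B$. First I would recognize that the quantity in the statement is an instance of the left-hand side of Lemma~\ref{lem:countsketch1}: setting $\tilde{\A}=\A^T\in\RB^{n\times d}$ and $\tilde{\B}=\B^T\in\RB^{m\times d}$, we have $\A^T\S\S^T\B=\tilde{\A}\S\S^T\tilde{\B}^T$ and $\A^T\B=\tilde{\A}\tilde{\B}^T$, so Lemma~\ref{lem:countsketch1} applies verbatim and gives the exact identity (expectation over $\S$ only)
\begin{small}
\begin{equation*}
\EB_{\S}\big\|\A^T\S\S^T\B-\A^T\B\big\|_F^2
=\frac{1}{s}\sum_{i=1}^{n}\sum_{j=1}^{m}\Big(\sum_{k\neq l}a_{ki}^2 b_{lj}^2+\sum_{k\neq l}a_{ki}b_{kj}a_{li}b_{lj}\Big),
\end{equation*}
\end{small}
where $a_{ki},b_{kj}$ are the entries of $\A,\B$ and $d$ is their shared row dimension.

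Next I would take expectation over the IID entries of $\A$ and $\B$ (assumed independent of each other and, as is standard for random weights/probes, zero-mean). Writing $\sigma_A^2=\EB[a_{ki}^2]$ and $\sigma_B^2=\EB[b_{kj}^2]$, the first inner sum contributes $\sigma_A^2\sigma_B^2$ for each of the $d(d-1)$ pairs $k\neq l$, while the second inner sum factors for $k\neq l$ as $\EB[a_{ki}a_{li}]\,\EB[b_{kj}b_{lj}]$, which vanishes because distinct entries in a column are independent and mean-zero. Summing over the $nm$ index pairs $(i,j)$ yields
\begin{small}
\begin{equation*}
\EB\big\|\A^T\S\S^T\B-\A^T\B\big\|_F^2=\frac{d(d-1)}{s}\,nm\,\sigma_A^2\sigma_B^2.
\end{equation*}
\end{small}

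In parallel I would compute the ``signal'' $\EB\|\A^T\B\|_F^2$ by the same moment bookkeeping: the $(i,j)$ entry of $\A^T\B$ is $\sum_k a_{ki}b_{kj}$, and upon squaring and taking expectations every off-diagonal term $k\neq k'$ dies (again by zero-mean independence), leaving only the $d$ diagonal terms $\sigma_A^2\sigma_B^2$, so that $\EB\|\A^T\B\|_F^2=nm\,d\,\sigma_A^2\sigma_B^2$. Dividing the two displays, the moment constants and the factor $nm$ cancel and the ratio is exactly $(d-1)/s=\Theta(d/s)$, which is the claim.

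I expect the main difficulty to be conceptual rather than computational. For a fixed pair $\A,\B$ the quantity $\EB_{\S}\|\cdots\|_F^2$ is not literally proportional to $\|\A^T\B\|_F^2$, so the $\Theta$ has to be read in the averaged (typical-value) sense; it is precisely the zero-mean IID structure that makes the two leading-order expressions proportional by annihilating the cross terms in both the sketch-error sum and the signal. Two points need care: the zero-mean hypothesis is essential, since nonzero means would let the signal $\|\A^T\B\|_F^2$ grow like $d^2$ and collapse the ratio to $\Theta(1/s)$; and if a per-realization (rather than in-expectation) statement is desired, I would append a standard concentration argument showing that both sums of $nm$ nearly independent summands concentrate about their means, thereby upgrading the equality in expectation to one holding with high probability.
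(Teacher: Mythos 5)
Your proposal is correct and uses the same skeleton as the paper's proof: specialize Lemma~\ref{lem:countsketch1} (up to the transposition you note), average the resulting exact identity over the IID entries of $\A$ and $\B$, and compare with the size of the signal. The difference is in how the means are treated, and here your version is the more careful one. The paper keeps general means: writing $M=(\mu_A^2+\sigma_A^2)(\mu_B^2+\sigma_B^2)$, it obtains the error $\frac{mn}{s}(d^2-d)\big[M+\mu_A^2\mu_B^2\big]=\Theta\big(\tfrac{mnd^2}{s}M\big)$ and then simply asserts that this equals $\Theta\big(\tfrac{d}{s}\big)\|\A\B^T\|_F^2$. But the same moment bookkeeping gives $\EB\|\A\B^T\|_F^2 = mn\big[dM+(d^2-d)\mu_A^2\mu_B^2\big]$, so the paper's last step is valid only when $\mu_A\mu_B=0$; if both means are nonzero, the signal grows like $d^2$ and the error-to-signal ratio collapses to $\Theta(1/s)$ --- exactly the failure mode you identify in your closing remark. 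Your explicit zero-mean hypothesis, together with your explicit computation of $\EB\|\A^T\B\|_F^2=nm\,d\,\sigma_A^2\sigma_B^2$ (which the paper never writes down), is precisely what is needed to make the stated conclusion true; your other caveat, that the $\Theta$ relation holds between expectations rather than per realization of $\A,\B$, is likewise glossed over in the paper, whose final line treats the random quantity $\|\A\B^T\|_F^2$ as if it were deterministic. In short: same key lemma and same reduction, but your bookkeeping supplies the hypothesis and the signal computation that the paper's own proof is missing.
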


\begin{proof}
	Assume all the entries of $\A$ are IID sampled from a distribution with mean $\mu_A$ and standard deviation $\sigma_A$;
	assume all the entries of $\B$ are IID sampled from a distribution with mean $\mu_B$ and standard deviation $\sigma_B$.
	It follows from Lemma~\ref{lem:countsketch1} that 
	\begin{align*}
	& \EB \big[ \A \S \S^T \B^T - \A \B^T \big]^2
	\: = \: \frac{m n }{s} \big[  (d^2-d) (\mu_A^2 + \sigma_A^2) (\mu_B^2 + \sigma_B^2) +  (d^2-d) \mu_A^2 \mu_B^2 \big] \\
	& = \: \Theta \Big( \frac{m n d^2 }{s} (\mu_A^2 + \sigma_A^2) (\mu_B^2 + \sigma_B^2)  \Big)
	\: = \: \Theta \Big( \frac{d }{s}  \Big) \| \A \B^T \|_F^2 .
	\end{align*}
\end{proof}

\section{Decentralized Learning} \label{sec:decentralized}

\begin{wrapfigure}{r}{0.45\textwidth} 
	\vspace{-8pt}
	\begin{center}
		\includegraphics[width=0.4\textwidth]{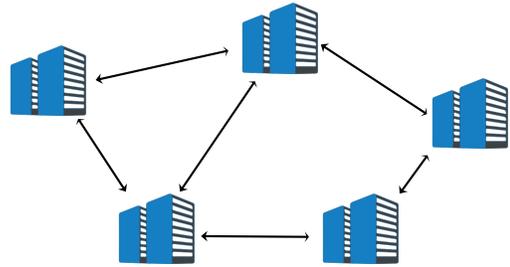}
		\caption{Decentralized learning in a peer-to-peer network.}
		\label{fig:decentralized}
	\end{center}
	\vspace{-20pt}
	\vspace{1pt}
\end{wrapfigure} 

Instead of relying on a central server, multiple parties can collaborate using such a peer-to-peer network as Figure~\ref{fig:decentralized}.
A client is a compute node in the graph and connected to a few neighboring nodes.
Many decentralized optimization algorithms have been developed~\cite{bianchi2013performance,yuan2016convergence,sirb2016consensus,colin2016gossip,lian2017can,lan2017communication,tang2018d}.
The nodes collaborate by, for example, aggregating its neighbors' model parameters, taking a weighted average of neighbors' and its own parameters as the intermediate parameters, and then locally performing an SGD update.


We find that the attacks of \cite{melis2019exploiting,zhu2019deep} can be applied to this kind of decentralized learning.
Note that a node shares its model parameters with its neighbors.
If a node is malicious, it can use its neighbors' gradients and model parameters to infer their data.
Let a neighbor's (the victim) parameters in two consecutive rounds be $\W_{\textrm{old}}$ and $\W_{\textrm{new}}$.
The difference, $\De = \W_{\textrm{old}} - \W_{\textrm{new}}$, is mainly the gradient evaluated on the victim's data.\footnote{Besides the victim's gradient, $\De$ contains the victim's neighbors' gradients, but their weights are usually lower than the victim's gradient.}
With the model parameters $\W$ and updating direction $\De$ at hand, the attacker can perform the gradient-based attacks of \cite{melis2019exploiting,zhu2019deep}.

Our DBCL can be easily applied under the decentralized setting:
two neighboring compute nodes agree upon the random seeds, sketching their model parameters, and communicate the sketches.
This can stop any node from knowing, $\De = \W_{\textrm{old}} - \W_{\textrm{new}}$, i.e., the gradient of the neighbor.
We will empirically study the decentralized setting in our future work.

\newpage

\lstset{ %
extendedchars=false,            
language=Python,                
xleftmargin=1em,
xrightmargin=1em,
basicstyle=\footnotesize,    
tabsize=3,                            
numbers=left,                   
numberstyle=\tiny,              
stepnumber=1,                   
numbersep=5pt,                  
keywordstyle=\color[rgb]{0,0,1},                
commentstyle=\color[rgb]{0.133,0.545,0.133},    
stringstyle=\color[rgb]{0.627,0.126,0.941},      
backgroundcolor=\color{white}, 
showspaces=false,               
showstringspaces=false,         
showtabs=false,                 
frame=single,                 
breaklines=true,                
breakatwhitespace=false,        
mathescape=true,escapechar=?    
escapeinside={\%*}{*)},         
}

\section{Code of Core Algorithms}

The following code implements the CountSketch and transposed CountSketch.
Given matrix $\A$, the CountSketch outputs $\C = \A \S$.
Given $\C$, the transposed CountSketch outputs $\B = \C \S^T$.

\begin{lstlisting}[language=Python]
class Sketch():

    # Random Hashing
    # Generate random indices and random signs
    # Args:
    #    n: (integer) number of items to be hashed
    #    q: (integer) map n items to a table of s=n/q rows
    # Return:
    #    hash_idx: (q-by-s Torch Tensor) contain random integer in {0, 1, ..., s-1}
    #    rand_sgn: (n-by-1 Torch Tensor) contain random signs (+1 or -1)
    def rand_hashing(n, q):
        s = math.floor(n / q)
        t = torch.randperm(n)
        hash_idx = t[0:(s * q)].reshape((q, s))
        rand_sgn = torch.randint(0, 2, (n,)).float() * 2 - 1
        return hash_idx, rand_sgn

    # Count sketch
    # It converts m-by-n matrix to m-by-s matrix
    # Args:
    #    a: (m-by-n Torch Tensor) input matrix
    #    hash_idx: (q-by-s Torch Tensor) contain random integer in {0, 1, ..., s-1}
    #    rand_sgn: (n-by-1 Torch Tensor) contain random signs (+1 or -1)
    # Return:
    #    c: m-by-s sketch (Torch Tensor) (result of count sketch)
    def countsketch(a, hash_idx, rand_sgn):
        m, n = a.shape
        s = hash_idx.shape[1]
        c = torch.zeros([m, s], dtype=torch.float32)
        b = a.mul(rand_sgn)

        for h in range(s):
            selected = hash_idx[:, h]
            c[:, h] = torch.sum(b[:, selected], dim=1)

        return c

    # Transpose count sketch
    # The "countsketch" function converts m-by-n matrix A to m-by-s matrix C
    # This function maps C back to a m-by-n matrix B
    # Args:
    #    c: (m-by-s Torch Tensor) input matrix
    #    hash_idx: (q-by-s Torch Tensor) contain random integer in {0, 1, ..., s-1}
    #    rand_sgn: (n-by-1 Torch Tensor) contain random signs (+1 or -1)
    # Return:
    #    b: m-by-n matrix
    def transpose_countsketch(c, hash_idx, rand_sgn):
        m, s = c.shape
        n = len(rand_sgn)
        b = torch.zeros([m, n], dtype=torch.float32)
        for h in range(s):
            selected = hash_idx[:, h]
            b[:, selected] = c[:, h].reshape(m, 1)
        b = b.mul(rand_sgn)
        return b
\end{lstlisting}

\newpage

The standard linear function of PyTorch computes $\Z = \X \W^T + \B$, where $\X$ is a batch of inputs, $\W$ is the weight matrix, and $\B$ is the bias (aka intercept).
With CountSketch applied, the output becomes $\Z = \X \S \S^T \W^T + \B$.
We need to implement both the forward function and the backward function.
The backward function is called during backpropagation.

\begin{lstlisting}[language=Python]
class SketchLinearFunction(torch.autograd.Function):
    @staticmethod
    def forward(ctx, input, weight, bias, hash_idx, rand_sgn, training=True, q=2):
        if training:
            # input_features = weight.shape[-1]

            # sketching the input and weight matrices
            # hash_idx, rand_sgn = Sketch.rand_hashing(input_features, q)
            input_sketch = Sketch.countsketch(input, hash_idx, rand_sgn)
            weight_sketch = Sketch.countsketch(weight, hash_idx, rand_sgn)
            output = input_sketch.mm(weight_sketch.t())

            ctx.save_for_backward(input_sketch, weight_sketch, bias, hash_idx, rand_sgn)
        else:
            output = input.mm(weight.t())

        output += bias.unsqueeze(0).expand_as(output)
        return output

    @staticmethod
    def backward(ctx, grad_output):
        input_sketch, weight_sketch, bias, hash_idx, rand_sgn = ctx.saved_tensors
        grad_input = grad_weight = grad_bias = grad_training = None

        if ctx.needs_input_grad[0]:
            grad_input_tmp = grad_output.mm(weight_sketch)
            grad_input = Sketch.transpose_countsketch(grad_input_tmp, hash_idx, rand_sgn)
        if ctx.needs_input_grad[1]:
            grad_weight_tmp = grad_output.t().mm(input_sketch)
            grad_weight = Sketch.transpose_countsketch(grad_weight_tmp, hash_idx, rand_sgn)
        if ctx.needs_input_grad[2]:
            grad_bias = grad_output.sum(0).squeeze(0)

        return grad_input, grad_weight, grad_bias, None, None, None, None
\end{lstlisting}

\newpage

The following PyTorch code applies CountSketch to the parameter matrices of a dense layer (aka linear layer or fully-connected layer).

\begin{lstlisting}[language=Python]
class SketchLinear(nn.Module):
    def __init__(self, input_features, output_features, q=2):
        super(SketchLinear, self).__init__()
        self.input_features = input_features
        self.output_features = output_features
        self.q = q

        self.weight = nn.Parameter(torch.Tensor(output_features, input_features))
        self.bias = nn.Parameter(torch.Tensor(output_features))
        self.register_parameter('weight', self.weight)
        self.register_parameter('bias', self.bias)

        bound = 1 / math.sqrt(input_features)
        scaling = math.sqrt(3.0)
        self.weight.data.uniform_(-bound * scaling, bound * scaling)
        self.bias.data.uniform_(-bound, bound)

    def forward(self, input, hash_idx, rand_sgn):
        return SketchLinearFunction.apply(input, self.weight, self.bias, hash_idx, rand_sgn, self.training, self.q)

    def extra_repr(self):
        return 'input_features={}, output_features={}, weight={}, bias={}'.format(
            self.input_features, self.output_features, self.weight, self.bias
        )
\end{lstlisting}

\newpage

The following code applies CountSketch to the convolution function of PyTorch.
We express convolution as matrix multiplication, and then apply CountSketch in the same way as above.

\begin{lstlisting}[language=Python]
class SketchConvFunction(torch.autograd.Function):
    @staticmethod
    def forward(ctx, input, weight, bias, k, hash_idx, rand_sgn, training=True, q=2):
        '''
        Args:
            input: shape=(b, c0, w0, h0)
            weight: shape=(c1, c0*k*k)
            bias: shape=(c1)
            k: number of kernels

        Return:
            output: shape=(b, c1, w1, h1)

        Note:
            b: batch size
            c0: number of input channels
            c1: number of output channels
        '''
        b, c0, w0, h0 = input.shape
        c1 = weight.shape[0]
        w1 = w0 + 1 - k
        h1 = h0 + 1 - k

        # input tensor (b, c0, w0, h0) to patches (b*w1*h1, k*k*c0)
        fan_in = k * k * c0
        x = nn.functional.unfold(input, (k, k)).transpose(1, 2).reshape(b * w1 * h1, fan_in)

        if training:
            # sketching the input and weight matrices
            # hash_idx, rand_sgn = Sketch.rand_hashing(fan_in, q)
            x_sketch = Sketch.countsketch(x, hash_idx, rand_sgn)
            weight_sketch = Sketch.countsketch(weight, hash_idx, rand_sgn)
            z = x_sketch.matmul(weight_sketch.t())  # shape=(b*w1*h1, c1)

            # save for backprop
            shapes = torch.IntTensor([k, b, c0, w0, h0, c1, w1, h1])
            ctx.save_for_backward(x_sketch, weight_sketch, bias, shapes, hash_idx, rand_sgn)
        else:
            # the multiplication of x and w transpose
            z = x.matmul(weight.t())  # shape=(b*w1*h1, c1)

        # add bias
        bias_expand = bias.reshape(1, c1).expand([b * w1 * h1, c1])
        out_reshape = z + bias_expand  # shape=(b*w1*h1, c1)
        output = out_reshape.reshape(b, w1 * h1, c1).transpose(1, 2).reshape(b, c1, w1, h1)

        return output
        
    @staticmethod
    def backward(ctx, grad_output):
        x_sketch, weight_sketch, bias, shapes, hash_idx, rand_sgn = ctx.saved_tensors
        grad_input = grad_weight = grad_bias = None
        k, b, c0, w0, h0, c1, w1, h1 = shapes

        grad_output1 = grad_output.view(b, c1, -1).transpose(1, 2).reshape(b * w1 * h1, c1)  # shape=(b*w1*h1, c1)

        if ctx.needs_input_grad[0]:
            grad_x0 = grad_output1.matmul(weight_sketch)  # shape=(b*w1*h1, s)
            grad_x1 = Sketch.transpose_countsketch(grad_x0, hash_idx, rand_sgn)  # shape=(b*w1*h1, c0*k*k)
            grad_x2 = grad_x1.reshape(b, w1 * h1, c0 * k * k).transpose(1, 2)
            grad_input = nn.functional.fold(grad_x2, (w0, h0), (k, k))  # shape=(b, c0, w0, h0)
        if ctx.needs_input_grad[1]:
            grad_w_sketch = grad_output1.t().matmul(x_sketch)  # shape=(c1, s)
            grad_weight = Sketch.transpose_countsketch(grad_w_sketch, hash_idx, rand_sgn)  # shape=(c1, c0*k*k)
        if ctx.needs_input_grad[2]:
            grad_bias = grad_output1.sum(0)

        return grad_input, grad_weight, grad_bias, None, None, None, None, None
\end{lstlisting}

\newpage

The following PyTorch code applies CountSketch to the parameter matrices of a convolutional layer.

\begin{lstlisting}[language=Python]
class SketchConv(nn.Module):
    def __init__(self, in_channels, out_channels, kernel_size, q=2):
        super(SketchConv, self).__init__()
        self.in_channels = in_channels
        self.out_channels = out_channels
        self.kernel_size = kernel_size
        self.q = q

        self.weight = nn.Parameter(torch.Tensor(out_channels, in_channels * kernel_size * kernel_size))
        self.bias = nn.Parameter(torch.Tensor(out_channels))
        self.register_parameter('weight', self.weight)
        self.register_parameter('bias', self.bias)

        # uniform initialization
        scaling = math.sqrt(6.0)
        bound = 1 / math.sqrt(in_channels * kernel_size * kernel_size)
        self.weight.data.uniform_(-bound * scaling, bound * scaling)
        self.bias.data.uniform_(-bound, bound)

    def forward(self, input, hash_idx, rand_sgn):
        return SketchConvFunction.apply(input, self.weight, self.bias, self.kernel_size, hash_idx, rand_sgn, self.training, self.q)

    def extra_repr(self):
        return 'in_channels={}, out_channels={}, kernel_size={}, weight={}, bias={}'.format(
            self.in_channels, self.out_channels, self.kernel_size, self.weight, self.bias
        )
\end{lstlisting}

\end{document}